\documentclass{article}

\usepackage[accepted]{icml2026}

\usepackage[utf8]{inputenc}    
\usepackage[T1]{fontenc}       
\usepackage{lmodern}           
\usepackage{textcomp}          
\usepackage{tcolorbox}

\usepackage{microtype}         
\usepackage{hyperref}          
\usepackage{url}
\usepackage{booktabs}          
\usepackage{amsmath,amssymb,amsfonts,mathtools,amsthm}
\usepackage{xcolor}
\usepackage{bm}
\usepackage{multirow}
\usepackage{algorithm}
\usepackage{algorithmic}
\usepackage{float}             
\usepackage{sidecap}           

\usepackage{subcaption}        
\usepackage[capitalize,noabbrev]{cleveref}

\usepackage{newunicodechar}
\newunicodechar{Δ}{\ensuremath{\Delta}}
\newunicodechar{β}{\ensuremath{\beta}}

\theoremstyle{plain}
\newtheorem{theorem}{Theorem}[section]        
\newtheorem{lemma}[theorem]{Lemma}            
\newtheorem{proposition}[theorem]{Proposition}
\newtheorem{corollary}[theorem]{Corollary}

\newtheoremstyle{definitionstyle}
{10pt}{10pt}{}{}{\bfseries}{.}{ }{}
\theoremstyle{definitionstyle}

\newtheorem{assumption}[theorem]{Assumption}
\newtheorem{remark}[theorem]{Remark}

\DeclareMathOperator{\tr}{tr}

\definecolor{RED}{rgb}{0.7,0,0}
\definecolor{BLUE}{rgb}{0,0,0.69}
\definecolor{GREEN}{rgb}{0,0.6,0}
\definecolor{PURPLE}{rgb}{0.69,0,0.8}
\definecolor{ORANGE}{RGB}{255,103,0}
\definecolor{BROWN}{RGB}{100,20,45}

\usepackage{listings}
\usepackage{algorithm}
\usepackage{algorithmic}
\usepackage{xcolor}

\lstdefinestyle{greenstyle}{
    language=Python,
    backgroundcolor=\color{green!10},      
    basicstyle=\ttfamily\small\color{green!50!black}, 
    frame=single,
    rulecolor=\color{green!80!black},       
    breaklines=true,
    stepnumber=1,
    numbersep=10pt,
    showstringspaces=false,
    escapeinside={(*@}{@*)},
    columns=flexible,
    tabsize=2
}

\icmltitlerunning{Poisoning}

\begin{document}

\twocolumn[
\icmltitle{Safety-Efficacy Trade Off: Robustness against Data-Poisoning}

\begin{icmlauthorlist}
	\icmlauthor{Diego Granziol}{ox}
\end{icmlauthorlist}

\icmlaffiliation{ox}{Mathematical Institute, University of Oxford, UK}
\icmlcorrespondingauthor{Diego Granziol}{granziol@maths.ox.ac.uk}

\icmlkeywords{Data Poisoning, Backdoors, Input Hessian, Stepwise Regression, ICML}

\vskip 0.3in
]

\printAffiliationsAndNotice{} 

\begin{abstract}
Backdoor and data-poisoning attacks can achieve high attack success while evading existing spectral and optimisation-based defences. We show that this behaviour is not incidental, but arises from a fundamental geometric mechanism in input space. Using kernel ridge regression as an exact model of wide neural networks, we prove that clustered dirty-label poisons induce a rank-one spike in the input Hessian whose magnitude scales quadratically with attack efficacy. Crucially, for nonlinear kernels we identify a near-clone regime in which poison efficacy remains order-one while the induced input curvature vanishes, making the attack provably spectrally undetectable. We further show that input-gradient regularisation contracts poison-aligned Fisher and Hessian eigenmodes under gradient flow, yielding an explicit and unavoidable safety–efficacy trade-off by reducing data-fitting capacity. For exponential kernels, this defence admits a precise interpretation as an anisotropic high-pass filter that increases the effective length scale and suppresses near-clone poisons. Extensive experiments on linear models and deep convolutional networks across MNIST and CIFAR-10/100 validate the theory, demonstrating consistent lags between attack success and spectral visibility, and showing that regularisation and data augmentation jointly suppress poisoning. Our results establish when backdoors are inherently invisible, and provide the first end-to-end characterisation of poisoning, detectability, and defence through input-space curvature.
\end{abstract}

\section{Introduction}
With foundation models set to underpin critical infrastructure from healthcare diagnostics to financial services, there has been a renewed focus on their security. Specifically, both classical and foundational deep learning models have been shown to be vulnerable to backdooring, 
where a small fraction of the data set is mislabelled and marked with an associated feature 
\citep{papernot2018sok, carlini2019secret,he2022indiscriminate,wang2024stronger}. If malicious actors exploit these vulnerabilities, the consequences could be both widespread and severe, undermining the reliability of foundation models in security-sensitive deployments. 

Recent work has revealed diverse and increasingly sophisticated backdooring strategies. \citet{Xiang2024BadChain} insert hidden reasoning steps to trigger malicious outputs, evading shuffle-based defences. \citet{Li2024BadEdit} modify as few as 15 samples to implant backdoors in LLMs. Other approaches include reinforcement learning fine-tuning \citep{Shi2023BadGPT} and continuous prompt-based learning \citep{Cai2022BadPrompt}. Deceptively aligned models that activate only under specific triggers are shown in \citet{Hubinger2024Sleeper}, while contrastive models like CLIP are vulnerable to strong backdoors \citep{carlini2022poisoning}. Style-based triggers bypass token-level defences \citep{pan2024trojan}, and \textit{ShadowCast} introduced in \citet{xu2024shadowcast} uses clean-label poisoning to embed misinformation in vision–language models.

Although backdooring in machine learning has been extensively studied experimentally, robust mathematical foundations are still lacking, even for basic models and attack scenarios. Unfortunately this hit or miss attitude to experimental data poisoning limits the community's ability to develop principled defence mechanisms and to understand their limitations. \citep{granziol2025linearapproachdatapoisoning} Consider a random matrix theory approach to rank-$1$ data poisoning of random data with random labels and develop results on the mean and variance of the prediction as a function of poison fraction $\theta$ and regularisation $\lambda$.

\paragraph{Contributions.}
In this paper, we develop a theoretical framework for understanding data poisoning in non-linear kernel regression, which we use as an exact model of wide neural networks. We use this model to analyse input-space curvature under poisoning and complement the theory with extensive experiments on finite-width deep neural networks to assess which predicted phenomena persist in practice. The main theoretical contributions are as follows:
\begin{itemize}
	\item We show that for sufficiently strong backdoor data poisoning, the \textbf{top eigenvector of the input Hessian aligns with the poison direction}, providing a principled diagnostic for detectability.
	\item We identify a \textit{twilight zone} for non-linear kernels, in contrast to the linear case, in which \textbf{backdoor data poisoning remains effective while inducing no spectral signature}.
	\item We prove that adding a term proportional to the \textbf{square of the input gradient of the loss} provably reduces the impact and efficacy of backdoor data poisoning, at the unavoidable cost of reduced data-fitting capacity.
	\item For the exponential kernel, we show that this regularisation admits an interpretation as \textbf{anisotropic damping}, with quadratic suppression of high-frequency modes.
\end{itemize}
Empirically, we further show that:
\begin{itemize}
	\item When combined with the proposed regularisation, standard data augmentation is effective at mitigating backdoor data poisoning, whereas it is not in isolation.
	\item Increasing training duration (using more epochs) improves the resulting safety--efficacy frontier.
\end{itemize}

\section{Related Work}
\label{sec:related}
Existing work that uses adversarial training against poisoning and backdoors
\cite{geiping2021poison,liu2022friendly,wei2023shared,bal2025adversarial,hallaji2023label}
treats AT as an empirical defence for (mostly) dirty-label or backdoor attacks, evaluating robustness in terms of test accuracy but not analysing how it reshapes the predictor’s input geometry under poisoning.  
In parallel, poisoning and robustness have been studied for linear and kernel regression
\cite{jagielski2018manipulating,liu2017robustlinreg,muller2020regpoison,zhao2024robustnp}
and for adversarial/robust kernel methods and NTK models
\cite{zhu2022arks,allerbo2025fastrobustkrr,deng2020aif,ribeiro2025kaf,jacot2018ntk,arora2019exact,wang2021robustlearning,karmakar2020depth2},
but these works focus on prediction error or certificates and do not characterise how specific poisoned patterns (e.g.\ duplicated dirty-label clusters) manifest in the input Hessian or input Fisher, nor how AT acts on those directions.  
By contrast, we use a kernel ridge/deep-kernel model to (i) derive closed-form laws for the effect of duplicated dirty-label poisons on the score, input Hessian and input Fisher, including a high-efficacy “clone” regime with low input curvature, and (ii) show analytically and empirically that adversarial training (via input-gradient regularisation/increased length-scale) provably reduces poisoning efficacy by shrinking the input Fisher along high-energy poison directions, yielding an explicit safety–efficacy trade-off.
\section{Kernel Regression Model}
\label{sec:theory}

\paragraph{Kernel ridge regression and input curvature.}
Let $\{(x_i,y_i)\}_{i=1}^n$ with $x_i\in\mathbb{R}^p$ and a twice continuously
differentiable positive–definite kernel $k$.
Define $K_{ij}=k(x_i,x_j)$.
Kernel ridge regression (KRR) predicts
\begin{align}
	f(x)
	&=
	\sum_{i=1}^n \alpha_i\,k(x,x_i), \\
	\boldsymbol{\alpha}
	&=
	(K+n\lambda I)^{-1}\boldsymbol{y},
\end{align}
with ridge parameter $\lambda>0$.
For squared loss $L(x,y)=\tfrac12(f(x)-y)^2$,
\begin{align}
	\nabla_x f(x)
	&=
	\sum_{i=1}^n \alpha_i\,\nabla_x k(x,x_i), \\
	\nabla_x^2 L(x,y)
	&=
	\nabla_x f(x)\nabla_x f(x)^\top
	+
	(f(x)-y)\,\nabla_x^2 f(x).
\end{align}
The Gauss–Newton term $\nabla_x f\nabla_x f^\top$ is rank-one PSD with top eigenvalue
$\|\nabla_x f(x)\|^2$.

\subsection{Cloned poison model}

Let $P$ index $m$ poisoned samples clustered at $\zeta$ with label $y_t$.
Fix a trigger point $x_0$ and define
\begin{align}
	k_0 &:= k(x_0,\zeta), \\
	k_\zeta &:= k(\zeta,\zeta), \\
	c &:= n\lambda .
\end{align}

\begin{assumption}[Tight cluster and dominance at the trigger]
	\label{asmp:cluster}
	The poison block satisfies
	$
	K_{PP}\approx k_\zeta\,\mathbf{1}\mathbf{1}^\top
	$
	and cross-block effects are negligible at $x_0$, so that the poison dominates
	the change in both $f(x_0)$ and $\nabla_x f(x_0)$.
\end{assumption}

Define the scalar gain
\begin{equation}
	S(m;\lambda)
	=
	\frac{m}{c+k_\zeta m}.
\end{equation}

\begin{lemma}[Aggregate poison gain]
	\label{lem:sum-alpha}
	Under Assumption~\ref{asmp:cluster},
	\begin{equation}
		\mathbf{1}^\top \boldsymbol{\alpha}_P
		=
		y_t\,S(m;\lambda).
	\end{equation}
\end{lemma}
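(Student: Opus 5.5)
Under Assumption~\ref{asmp:cluster} we treat the poison block as decoupled, so that its coefficients solve the restricted ridge system
\begin{equation*}
(K_{PP}+cI)\,\boldsymbol{\alpha}_P=\boldsymbol{y}_P=y_t\mathbf{1},
\qquad K_{PP}=k_\zeta\mathbf{1}\mathbf{1}^\top .
\end{equation*}
The plan is to justify this reduction and then solve the rank-one-plus-identity system explicitly.

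First we partition $K+cI$ into clean ($C$) and poison ($P$) blocks and write the $P$-rows of $(K+cI)\boldsymbol{\alpha}=\boldsymbol{y}$ as
\begin{equation*}
(K_{PP}+cI)\boldsymbol{\alpha}_P + K_{PC}\boldsymbol{\alpha}_C = y_t\mathbf{1}.
\end{equation*}
The assumption that cross-block effects are negligible is read as dropping $K_{PC}\boldsymbol{\alpha}_C$. Equivalently, the Schur complement $K_{PP}+cI-K_{PC}(K_{CC}+cI)^{-1}K_{CP}$ is replaced by $K_{PP}+cI$, and the right-hand side correction is discarded. This step is where the argument is only approximate: the equality in the lemma holds exactly for the decoupled model and up to these neglected terms otherwise. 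If one wants more, one could bound the error by $\|K_{PC}\|\,\|(K_{CC}+cI)^{-1}\|/c$, but I would simply state the result under the idealisation, as the paper does.

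Second, we solve the decoupled system using Sherman--Morrison, or more simply by symmetry. The matrix $k_\zeta\mathbf{1}\mathbf{1}^\top+cI$ has $\mathbf{1}$ as an eigenvector with eigenvalue $c+k_\zeta m$, and the right-hand side is proportional to $\mathbf{1}$. Hence
\begin{equation*}
\boldsymbol{\alpha}_P=\frac{y_t}{c+k_\zeta m}\mathbf{1}.
\end{equation*}
Summing the $m$ entries gives $\mathbf{1}^\top\boldsymbol{\alpha}_P = y_t\,m/(c+k_\zeta m)=y_t S(m;\lambda)$.

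The main obstacle is conceptual rather than computational. The reduction in the first step requires interpreting ``cross-block effects negligible'' as allowing us to ignore $K_{PC}\boldsymbol{\alpha}_C$. Once that is granted, the eigenvector argument finishes immediately. It also requires $c>0$, which guarantees invertibility for any $k_\zeta\ge 0$.
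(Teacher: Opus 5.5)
Your proposal is correct and follows the paper's proof. Both reduce, via Assumption~\ref{asmp:cluster}, to the decoupled system $(k_\zeta\mathbf{1}\mathbf{1}^\top+cI)\boldsymbol{\alpha}_P=y_t\mathbf{1}$, solve it by noting that $\mathbf{1}$ is an eigenvector with eigenvalue $c+k_\zeta m$ (the paper phrases this as Sherman--Morrison), and then sum the $m$ entries. Your explicit identification of the dropped term $K_{PC}\boldsymbol{\alpha}_C$ makes precise what the paper leaves implicit. The parenthetical error bound you quote is only schematic, since it omits a factor like $\|y_C\|+\|K_{PC}\|\,\|\boldsymbol{\alpha}_P\|$, but nothing in the proof rests on it.
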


\subsection{Efficacy and input-Hessian spike}

\begin{theorem}[Efficacy of a cloned cluster]
	\label{thm:efficacy}
	Under Assumption~\ref{asmp:cluster},
	\begin{align}
		\Delta f(x_0)
		&=
		k_0\,y_t\,S(m;\lambda), \\
		&=
		k_0\,y_t\,
		\frac{m}{c+k_\zeta m}.
	\end{align}
	Moreover,
	\begin{align}
		m\ll \frac{c}{k_\zeta}
		&\Rightarrow
		\Delta f(x_0)=\Theta(m), \\
		m\to\infty
		&\Rightarrow
		\Delta f(x_0)\to \frac{k_0y_t}{k_\zeta}.
	\end{align}
\end{theorem}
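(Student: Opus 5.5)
The plan is to write the poison's contribution to the score at the trigger as a sum over the poison block, collapse that sum with Lemma~\ref{lem:sum-alpha}, and then read off the two asymptotic regimes from the explicit form of $S(m;\lambda)$.

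\textbf{Step 1: isolate the poison contribution.} Using $f(x_0)=\sum_i \alpha_i k(x_0,x_i)$, split the sum into the clean block and the poison block $P$. By Assumption~\ref{asmp:cluster}, cross-block effects are negligible at $x_0$. This means the clean coefficients and their kernel values at $x_0$ are unchanged to leading order by the poison. The change in the score is therefore
\begin{equation*}
\Delta f(x_0)=\sum_{i\in P}\alpha_i\,k(x_0,x_i).
\end{equation*}
Since every poison point sits at $\zeta$ (the tight-cluster part of the assumption), each $k(x_0,x_i)$ equals $k_0$. Factoring this out gives $\Delta f(x_0)=k_0\,\mathbf{1}^\top\boldsymbol{\alpha}_P$.

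\textbf{Step 2: apply the lemma.} Lemma~\ref{lem:sum-alpha} gives $\mathbf{1}^\top\boldsymbol{\alpha}_P = y_t\,S(m;\lambda)$. Substituting yields $\Delta f(x_0)=k_0 y_t\,m/(c+k_\zeta m)$ directly.

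It is worth recording why the lemma holds. With decoupled blocks, $\boldsymbol{\alpha}_P$ solves $(k_\zeta\mathbf{1}\mathbf{1}^\top + cI)\boldsymbol{\alpha}_P = y_t\mathbf{1}$. Because $\mathbf{1}$ is an eigenvector of this matrix with eigenvalue $c+k_\zeta m$, we get $\boldsymbol{\alpha}_P = y_t\mathbf{1}/(c+k_\zeta m)$, and summing the $m$ entries gives $y_t S$.

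\textbf{Step 3: the asymptotic regimes.} Both limits are elementary.
\begin{itemize}
\item If $m\ll c/k_\zeta$, then $k_\zeta m\ll c$. Hence $S=(m/c)\,\bigl(1+k_\zeta m/c\bigr)^{-1}=m/c+O(m^2k_\zeta/c^2)$, and so $\Delta f(x_0)=k_0y_t\,m/c\,(1+o(1))=\Theta(m)$. This treats $k_0,y_t,c$ as fixed and assumes $k_0y_t\neq 0$.
\item As $m\to\infty$, $S=1/(k_\zeta + c/m)\to 1/k_\zeta$. Here $k_\zeta>0$ by positive-definiteness, so $\Delta f(x_0)\to k_0y_t/k_\zeta$.
\end{itemize}

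\textbf{Main obstacle.} The only genuinely delicate point is Step~1: justifying that the clean coefficients are unperturbed, so that $\Delta f(x_0)$ reduces to the poison-block sum. Without the assumption, the full solve $(K+cI)^{-1}$ couples the blocks. A Schur-complement correction would then change both $\boldsymbol{\alpha}_P$ and the clean $\boldsymbol{\alpha}$. I would not attempt to bound this coupling. Instead I will invoke Assumption~\ref{asmp:cluster} explicitly, treating $K$ as block-diagonal at leading order and taking the approximation $K_{PP}\approx k_\zeta\mathbf{1}\mathbf{1}^\top$ as exact. The stated identities then hold as leading-order equalities under that assumption.
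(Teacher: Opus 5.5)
Your proposal is correct and follows the paper's proof essentially step for step. The paper likewise writes $\Delta f(x_0)\approx\sum_{i\in P}\alpha_i k(x_0,\zeta)=k(x_0,\zeta)\,\mathbf{1}^\top\boldsymbol{\alpha}_P$ under Assumption~\ref{asmp:cluster} and substitutes Lemma~\ref{lem:sum-alpha}, which it proves via Sherman--Morrison (equivalent to your eigenvector observation); your explicit derivation of the two asymptotic regimes, which the paper leaves implicit, is also correct.
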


\begin{theorem}[Rank-1 input spike and spike–efficacy law]
	\label{thm:spike}
	Define the Gauss–Newton spike
	\begin{equation}
		\Lambda_{\mathrm{GN}}(x_0)
		:=
		\|\nabla_x f(x_0)\|^2 .
	\end{equation}
	Then
	\begin{align}
		\Lambda_{\mathrm{GN}}(x_0)
		&=
		\|\nabla_x k(x_0,\zeta)\|^2\,S(m;\lambda)^2, \\
		&=
		R_k(x_0,\zeta)\,
		\bigl(\Delta f(x_0)\bigr)^2,
	\end{align}
	where
	\begin{equation}
		R_k(x_0,\zeta)
		=
		\frac{\|\nabla_x k(x_0,\zeta)\|^2}{k_0^2}.
	\end{equation}
	Thus efficacy grows linearly in $m$ while curvature grows quadratically.
\end{theorem}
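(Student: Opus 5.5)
The plan is to work entirely within Assumption~\ref{asmp:cluster}, using Lemma~\ref{lem:sum-alpha} and Theorem~\ref{thm:efficacy}. Under the assumption the gradient at the trigger is dominated by the poison block, so
\begin{equation*}
\nabla_x f(x_0)\approx \sum_{i\in P}\alpha_i\,\nabla_x k(x_0,x_i).
\end{equation*}
Because the poisons are clustered at $\zeta$, every $x_i$ with $i\in P$ equals $\zeta$ (or lies in a vanishingly small neighbourhood of it). By $C^2$ smoothness of $k$, each $\nabla_x k(x_0,x_i)$ can then be replaced by the common vector $\nabla_x k(x_0,\zeta)$. This factors the sum into $\nabla_x k(x_0,\zeta)\,\mathbf{1}^\top\boldsymbol{\alpha}_P$.

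Lemma~\ref{lem:sum-alpha} gives $\mathbf{1}^\top\boldsymbol{\alpha}_P=y_tS(m;\lambda)$, so
\begin{equation*}
\nabla_x f(x_0)=y_tS(m;\lambda)\,\nabla_x k(x_0,\zeta).
\end{equation*}
Taking the squared norm and using $y_t^2=1$ yields the first identity for $\Lambda_{\mathrm{GN}}(x_0)$. If labels are not normalised to $\pm1$, a factor $y_t^2$ carries through and is absorbed identically in the second identity, so the spike–efficacy law is unaffected. This also shows that the Gauss–Newton term $\nabla_x f\nabla_x f^\top$ is rank one with top eigenvector $\nabla_x k(x_0,\zeta)/\|\nabla_x k(x_0,\zeta)\|$, i.e.\ aligned with the poison direction.

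For the second identity, Theorem~\ref{thm:efficacy} gives $\Delta f(x_0)=k_0y_tS(m;\lambda)$. Assuming $k_0\neq 0$, we have $S=\Delta f(x_0)/(k_0y_t)$. Substituting into the first identity gives $\Lambda_{\mathrm{GN}}(x_0)=\|\nabla_xk(x_0,\zeta)\|^2(\Delta f(x_0))^2/k_0^2=R_k(x_0,\zeta)(\Delta f(x_0))^2$.

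The closing scaling remark follows by combining this with the small-$m$ regime of Theorem~\ref{thm:efficacy}. When $m\ll c/k_\zeta$, we have $S\approx m/c$, so $\Delta f=\Theta(m)$ and $\Lambda_{\mathrm{GN}}=\Theta(m^2)$.

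The main obstacle is interpretive rather than computational. The statement equates $\Lambda_{\mathrm{GN}}$ with the poison-induced gradient, which requires that the clean-data contribution to $\nabla_x f(x_0)$ be negligible, or that one reads $\nabla_x f$ as the poison-induced change $\Delta\nabla_x f$. I would make this explicit by invoking the second half of Assumption~\ref{asmp:cluster}, which states that the poison dominates the change in both $f(x_0)$ and $\nabla_x f(x_0)$. I would also note that the clustering approximation is exact when the poisons are exact clones, $x_i=\zeta$. Otherwise the error in replacing each $\nabla_x k(x_0,x_i)$ by $\nabla_x k(x_0,\zeta)$ is $O(\max_{i\in P}\|x_i-\zeta\|)$ times $\sum_{i\in P}|\alpha_i|$, which is controlled by the same tight-cluster assumption.
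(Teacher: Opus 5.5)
Your proposal is correct and follows essentially the same route as the paper's proof. Both arguments factor $\nabla_x f(x_0)\approx(\mathbf{1}^\top\boldsymbol{\alpha}_P)\,\nabla_x k(x_0,\zeta)$ under poison dominance and clustering, invoke Lemma~\ref{lem:sum-alpha}, take norms, and eliminate $S(m;\lambda)$ via $\Delta f(x_0)=k_0 y_t S(m;\lambda)$; your added care is a useful tightening the paper omits, namely that the first identity needs $y_t^2=1$ while the second holds regardless, that $k_0\neq 0$ is required, and that the clustering error is $O(\max_{i\in P}\|x_i-\zeta\|)$.
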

\begin{remark}[Detectability lag]
	Let $\Lambda_{\mathrm{clean}}(x_0)$ be the background top input curvature.
	A curvature spike becomes detectable when
	\[
	(\Delta f(x_0))^2
	\gtrsim
	\frac{\Lambda_{\mathrm{clean}}(x_0)}{R_k(x_0,\zeta)} .
	\]
	A detectability lag arises only when $R_k(x_0,\zeta)$ is small.
	For linear kernels $R_k$ is constant order one and no intrinsic lag occurs,
	whereas for nonlinear kernels, such as the exponential kernel in the near-clone (defined subsequently)
	regime, $R_k \to 0$ and poisoning can be effective while remaining spectrally
	invisible.
\end{remark}
\subsection{Exponential kernel and near-clone regime}

Let
\begin{equation}
	k(x,x')
	=
	\exp\!\left(
	-\frac{\|x-x'\|^2}{2\ell^2}
	\right),
	\qquad
	r:=\|x_0-\zeta\|.
\end{equation}

\begin{corollary}[Exponential-kernel spike factor]
	\label{cor:exp-factor}
	For the exponential kernel,
	\begin{align}
		\|\nabla_x k(x_0,\zeta)\|^2
		&=
		\frac{r^2}{\ell^4}\,k_0^2, \\
		R_k(x_0,\zeta)
		&=
		\frac{r^2}{\ell^4}, \\
		\Lambda_{\mathrm{GN}}(x_0)
		&=
		\frac{r^2}{\ell^4}
		\bigl(\Delta f(x_0)\bigr)^2 .
	\end{align}
\end{corollary}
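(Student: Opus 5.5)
The corollary is a direct specialisation of Theorem~\ref{thm:spike}, so the plan is to compute the kernel gradient explicitly and substitute. The only kernel-specific ingredient is $\nabla_x k(x,\zeta)$ evaluated at $x=x_0$. Everything else, namely the gain $S(m;\lambda)$ from Lemma~\ref{lem:sum-alpha} and the identity $\Delta f(x_0)=k_0 y_t S(m;\lambda)$ from Theorem~\ref{thm:efficacy}, is inherited unchanged.

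First, differentiate $k(x,\zeta)=\exp\!\left(-\|x-\zeta\|^2/(2\ell^2)\right)$ with the chain rule. Since $\nabla_x \|x-\zeta\|^2 = 2(x-\zeta)$, this gives
\begin{equation*}
	\nabla_x k(x,\zeta) = -\frac{x-\zeta}{\ell^2}\,k(x,\zeta).
\end{equation*}
Evaluating at $x_0$ and taking the squared Euclidean norm, with $r=\|x_0-\zeta\|$ and $k_0=k(x_0,\zeta)$, yields $\|\nabla_x k(x_0,\zeta)\|^2 = r^2 k_0^2/\ell^4$, which is the first claim. Dividing by $k_0^2$ in the definition of $R_k$ from Theorem~\ref{thm:spike} gives $R_k(x_0,\zeta)=r^2/\ell^4$. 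This division is legitimate because $k_0>0$ for the Gaussian kernel.

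Finally, substitute this $R_k$ into the spike–efficacy law $\Lambda_{\mathrm{GN}}(x_0)=R_k(x_0,\zeta)\,(\Delta f(x_0))^2$ of Theorem~\ref{thm:spike}. As a consistency check, I would also verify the first form $\|\nabla_x k(x_0,\zeta)\|^2 S^2 = (r^2/\ell^4)\,k_0^2 S^2$ and compare it with $(\Delta f)^2 = k_0^2 y_t^2 S^2$. These agree exactly when $y_t^2=1$, i.e.\ for $\pm1$ target labels, which the theorem implicitly assumes.

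There is no real obstacle here; the computation is routine. The one point that needs care is this label-normalisation issue. I would state explicitly that the corollary inherits it from Theorem~\ref{thm:spike}. Otherwise I would carry a factor $y_t^{-2}$ in $R_k$, which does not change the $r^2/\ell^4$ scaling that drives the near-clone conclusion ($R_k\to 0$ as $r\to 0$).
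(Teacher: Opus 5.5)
Your proof is correct and follows the same route as the paper: the paper's deep-exponential-kernel chain-rule lemma with $\phi$ the identity gives exactly your $\nabla_x k(x_0,\zeta)=-k_0(x_0-\zeta)/\ell^2$, and the three identities then follow by taking norms and substituting into Theorem~\ref{thm:spike}. One correction to your side remark: the third identity needs no assumption on $y_t$, because $\nabla_x f(x_0)=(\mathbf{1}^\top\boldsymbol{\alpha}_P)\nabla_x k(x_0,\zeta)=\bigl(\Delta f(x_0)/k_0\bigr)\nabla_x k(x_0,\zeta)$ gives $\Lambda_{\mathrm{GN}}(x_0)=\frac{r^2}{\ell^4}\bigl(\Delta f(x_0)\bigr)^2$ for any label, so the discrepancy you spotted is a dropped factor $y_t^2$ in the first form $\|\nabla_x k(x_0,\zeta)\|^2S(m;\lambda)^2$ of Theorem~\ref{thm:spike}, not something to push into $R_k$, which is a purely kernel quantity.
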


\begin{corollary}[Near-clone regime $r\ll\ell$]
	If $r/\ell\ll1$, then
	\begin{align}
		k_0
		&=
		1-\frac{r^2}{2\ell^2}
		+
		O\!\left(\frac{r^4}{\ell^4}\right), \\
		\Delta f(x_0)
		&=
		y_t\,S(m;\lambda)
		\Bigl[1+O(r^2/\ell^2)\Bigr], \\
		\Lambda_{\mathrm{GN}}(x_0)
		&=
		S(m;\lambda)^2
		\frac{r^2}{\ell^4}
		\Bigl[1+O(r^2/\ell^2)\Bigr].
	\end{align}
\end{corollary}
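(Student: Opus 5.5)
The plan is to obtain all three expansions by substituting the Taylor expansion of $k_0$ into the exact identities already established. We do not need to redo any kernel ridge regression algebra.

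\textbf{Step 1: expand $k_0$.} Set $u := r^2/(2\ell^2)$. For the exponential kernel, $k_0 = k(x_0,\zeta) = e^{-u}$. Taylor's theorem with remainder gives
\[
e^{-u} = 1 - u + \tfrac{u^2}{2}e^{-\xi}, \qquad \xi\in(0,u).
\]
The remainder is therefore bounded by $u^2/2 = r^4/(8\ell^4)$, which yields the first claim. We also record two consequences:
\begin{align}
k_0 &= 1 + O(r^2/\ell^2), \\
k_0^2 &= e^{-2u} = 1 + O(r^2/\ell^2).
\end{align}

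\textbf{Step 2: efficacy.} Theorem~\ref{thm:efficacy} gives the exact identity $\Delta f(x_0) = k_0\,y_t\,S(m;\lambda)$. Substituting $k_0 = 1 + O(r^2/\ell^2)$ gives $\Delta f(x_0) = y_t S(m;\lambda)[1+O(r^2/\ell^2)]$.

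There is one point to check. $S$ depends on $k_\zeta = k(\zeta,\zeta)$, and for the exponential kernel $k_\zeta = 1$ exactly, independent of $r$. So $S(m;\lambda) = m/(c+m)$ carries no hidden $r$-dependence, and the relative error comes entirely from $k_0$.

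\textbf{Step 3: curvature.} By Theorem~\ref{thm:spike} and Corollary~\ref{cor:exp-factor},
\[
\Lambda_{\mathrm{GN}}(x_0) = \|\nabla_x k(x_0,\zeta)\|^2 S^2 = \frac{r^2}{\ell^4}\,k_0^2\,S^2 .
\]
The gradient norm follows from $\nabla_x k(x,\zeta) = -\frac{x-\zeta}{\ell^2}k(x,\zeta)$. Substituting $k_0^2 = 1 + O(r^2/\ell^2)$ gives the third display.

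An equivalent route is to combine $\Lambda_{\mathrm{GN}} = R_k(\Delta f)^2$ with Step 2. Squaring the bracket $[1+O(r^2/\ell^2)]$ keeps it of the same form, since $r/\ell \ll 1$.

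\textbf{Main subtlety.} The only nontrivial point is interpretive: the $O(\cdot)$ terms must be \emph{relative} errors that are uniform in $m$ and $\lambda$. This holds because the only $r$-dependence is the multiplicative factor $k_0$ or $k_0^2$, and $S$ is independent of $r$.

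We conclude by noting the consequence that makes this the near-clone regime. Efficacy stays $\Theta(S)$, which is order one for $m \gtrsim c$. Meanwhile $\Lambda_{\mathrm{GN}} \sim S^2 r^2/\ell^4 \to 0$ as $r \to 0$ for fixed $\ell$. This is the vanishing-$R_k$ behaviour anticipated in the detectability-lag remark.
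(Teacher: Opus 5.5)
Your proposal is correct and is essentially the paper's own argument: Taylor-expand $k_0=e^{-r^2/(2\ell^2)}$, and with it $\|\nabla_x k(x_0,\zeta)\|^2=(r^2/\ell^4)k_0^2$, then substitute into the efficacy and spike theorems. Your explicit Lagrange remainder and your check that $k_\zeta=1$ (so $S(m;\lambda)$ carries no $r$-dependence) make the relative errors uniform in $m,\lambda$, slightly tightening the paper's one-line proof; note, though, that your alternative route via $R_k(\Delta f)^2$ picks up a factor $y_t^2$, so it matches the stated display only when $|y_t|=1$, which the paper's spike theorem implicitly assumes.
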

\begin{remark}
	\label{rem:nearclone}
	The near-clone regime is defined by $\|x_0-\zeta\|\ll\ell$.
	For exponential kernels this gives
	$
	k(x_0,\zeta)\approx 1
	$
	and
	$
	\|\nabla_x k(x_0,\zeta)\|^2 = O(\|x_0-\zeta\|^2).
	$
	Hence poison efficacy remains order one while the induced input curvature vanishes quadratically, making detection difficult. Figure \ref{fig:cifar_triple_c8_theta_0_01} verifies the similarity in feature space (visualised as a heat map) between the poisoned \& clean target feature for a given example. 
\end{remark}
\begin{figure}[!h]
	\centering
		\vspace{-8pt}
	\includegraphics[width=\linewidth]{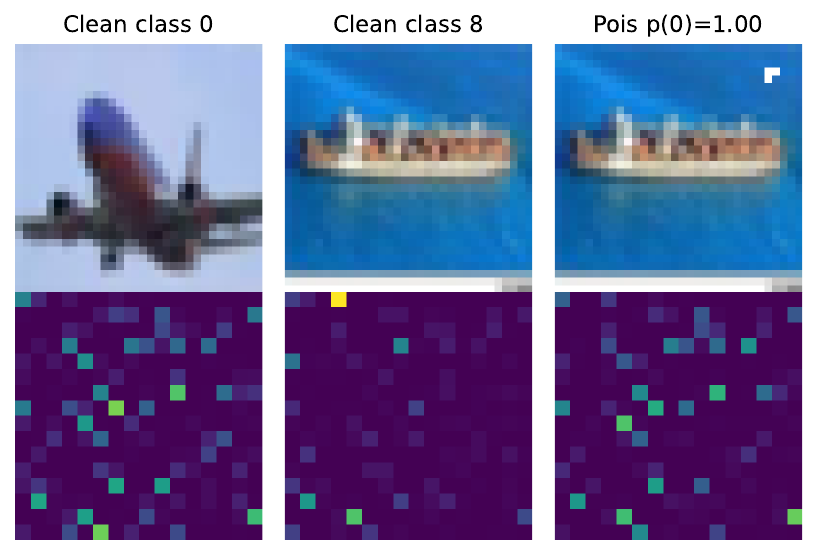}
	\caption{\textbf{Evidencing for the near-clone regime:} CIFAR-10 class $c=8$ at poison fraction $\theta = 0.01$.
		Top row: clean class-0/8 input, same class-8 input
		with the trigger applied (poisoned, label 0) together with the model's
		posterior $p(y=0 \mid x_{\mathrm{poison}})$. Bottom row: corresponding pre-\texttt{fc}
		feature maps, showing how the trigger moves the class-8 representation towards a
		region of feature space associated with class 0.}
	\label{fig:cifar_triple_c8_theta_0_01}
	\vspace{-8pt}
\end{figure}
\subsection{Evidence for feature collapse}
The assumption of poisons being \textit{near clones in kernel space} forms a major part of our theoretical analysis and hence necessitates experimental validation. Neural Collapse (NC) \citep{papyan2020neuralcollapse} shows that, at (near-)zero train error, last-layer features for each class concentrate around a class mean and classifier weights form a simplex ETF \citep{lu2020nccrossentropy,ji2021ulpm,hong2024jmlr,han2021mse}. In our setting, this implies that any example \emph{labelled as class $t$} including dirty-label poisons and their triggered counterparts, is driven toward the class-$t$ feature mean, so $\phi(x_0)$ and $\phi(x_p)$ become near-clones in feature space (small $r$ in our kernel view). This places dirty-label poisons precisely in the $r \ll \ell$ regime above, where they achieve high efficacy while inducing only a flat, low-curvature footprint in input space. In Figure \ref{fig:cifar_pca_side_by_side_theta_0_01}, by using PCA on a simplified $2$ class setting for our CIFAR-$10$ experiment (Section \ref{sec:deepnetexp}), we show that the poisoned class $1$ data is indistinguishable in input space from clean class $1$. However, the penultimate layer of the network features,show a a concentration of the poisoned class $1$ images towards the class $0$ mean. 
\begin{figure}[h!]
	\centering
	\includegraphics[width=\linewidth]{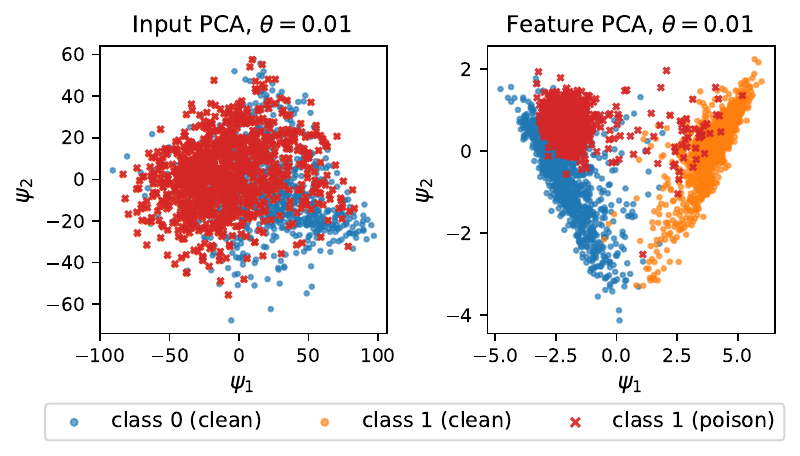}
	\caption{Input vs.\ feature PCA for poisoning CIFAR-10 with poison fraction $\theta = 0.01$.
		Left: input-space PCA $(\psi_1,\psi_2)$ for clean class 0, clean class 1, and poisoned class-1 images.
		Right: PCA of pre-\texttt{fc} features showing how the trigger moves class-1 examples towards the class-0 manifold.}
	\label{fig:cifar_pca_side_by_side_theta_0_01}
\end{figure}
\begin{figure}[h!]
	\centering
	\includegraphics[width=\linewidth]{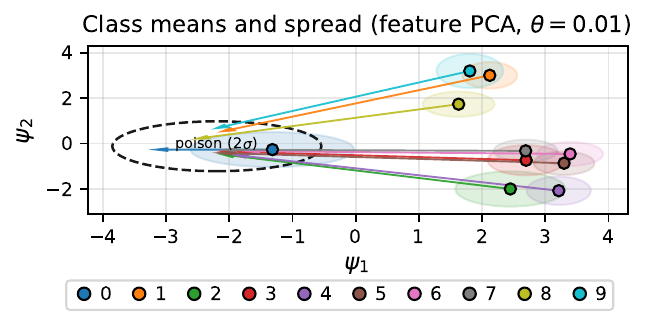}
	\caption{Class-wise means and spreads in pre-\texttt{fc} feature PCA space for CIFAR-10 at $\theta = 0.01$.
		Solid/dashed coloured/black ellipses show $1\sigma$/$2\sigma$ clean/poisoned class clusters, arrows indicate shifts of clean class means under the trigger. Triggered examples from all
		classes are mapped into a compact region near the class-0 manifold.}
	\label{fig:cifar_class_means_feature_pca_theta_0_01}
\end{figure}

 In Figure \ref{fig:cifar_class_means_feature_pca_theta_0_01} we show how poisoning each of the $9$ non target cifar classes pushes the feature mean towards the poison mean.
Whilst the intra-class variance is clearly smaller for the poisoned class $1$ than for the clean classes we do not quantify to what extent the deviation impacts our theoretical analysis, which could be interesting future work. We display the data in tabular format in Appendix \ref{app:ncollapse}.

\subsection{Gradient regularisation and defence}

We add an input-gradient penalty with strength $\kappa>0$:
\begin{equation}
	\mathcal{J}(w)
	=
	\mathbb{E}[L(w;x)]
	+
	\frac{\kappa}{2}
	\mathbb{E}\|\nabla_x L(w;x)\|^2 .
\end{equation}

\subsubsection{Kernel interpretation and loss of capacity}

For KRR, the representer theorem yields
\begin{equation}
	\bigl(
	K+\lambda I+\kappa G
	\bigr)\alpha
	=
	y ,
\end{equation}
where
\begin{equation}
	G
	=
	\sum_{i=1}^n
	(\nabla_x k(x_i,X))^\top
	(\nabla_x k(x_i,X))
	\succeq 0 .
\end{equation}

\begin{theorem}[Gradient regularisation reduces data-fitting capacity]
	\label{thm:capacity}
	Define the effective degrees of freedom
	\begin{equation}
		\mathrm{df}(\kappa)
		=
		\tr\!\left[
		K\,
		(K+\lambda I+\kappa G)^{-1}
		\right].
	\end{equation}
	Then $\mathrm{df}(\kappa)$ is strictly decreasing in $\kappa$, and the
	training residual $\|y-K\alpha\|^2$ is strictly increasing.
\end{theorem}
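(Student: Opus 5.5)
Throughout we write $M(\kappa):=K+\lambda I+\kappa G$, so that $\alpha(\kappa)=M(\kappa)^{-1}y$ and $\mathrm{df}(\kappa)=\tr[K M(\kappa)^{-1}]$. Since $K\succeq 0$, $\lambda>0$ and $G\succeq 0$, $M(\kappa)$ is symmetric positive definite for every $\kappa\ge 0$. Hence $\kappa\mapsto M(\kappa)^{-1}$ is smooth, and
\begin{equation}
\frac{d}{d\kappa}M(\kappa)^{-1}=-M(\kappa)^{-1}GM(\kappa)^{-1}.
\end{equation}
The argument differentiates both $\mathrm{df}$ and the residual and reads off signs from this identity.

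\textbf{Degrees of freedom.} Differentiating gives
\begin{equation}
\mathrm{df}'(\kappa)=-\tr\bigl[KM^{-1}GM^{-1}\bigr]=-\tr\bigl[(G^{1/2}M^{-1}K^{1/2})(G^{1/2}M^{-1}K^{1/2})^\top\bigr]=-\|G^{1/2}M^{-1}K^{1/2}\|_F^2\le 0,
\end{equation}
using cyclicity of the trace and the symmetry of $M^{-1}$. Strictness needs $G^{1/2}M^{-1}K^{1/2}\neq 0$. Since $M^{-1}$ is invertible, this is equivalent to $GM^{-1}K\ne 0$. I would state the nondegeneracy assumptions explicitly: $K\succ 0$, which holds for a positive-definite kernel with distinct points, and $G\neq 0$, which holds for a nonconstant kernel with gradients nonvanishing at some data pair. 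Under $K\succ0$ and $G\ne0$, the product $GM^{-1}K$ is nonzero because $M^{-1}K$ is invertible. This gives strict decrease.

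\textbf{Residual.} Write $H(\kappa)=KM^{-1}$ and $r(\kappa)=(I-H)y$. Then
\begin{equation}
r'(\kappa)=KM^{-1}GM^{-1}y=KM^{-1}G\alpha,
\end{equation}
so $\tfrac{d}{d\kappa}\|r\|^2=2\,r^\top KM^{-1}G\alpha$. A clean sign is not automatic here because $I-H$ is not symmetric. This is the main obstacle.

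My first attempt would be a variational route. I would treat $\alpha(\kappa)$ as the minimiser of the regularised objective in the RKHS parametrisation, with $\Phi(\alpha)=\|y-K\alpha\|^2$ as the data-fit term and $\alpha^\top G_K\alpha$ as the penalty. The standard argument goes as follows. Let $\kappa_1<\kappa_2$ and compare the optimality of $\alpha(\kappa_1)$ and $\alpha(\kappa_2)$ for their respective objectives. Adding the two optimality inequalities shows that the penalty term is nonincreasing and the data-fit term is nondecreasing in $\kappa$. Strictness follows whenever $\alpha(\kappa_1)\ne\alpha(\kappa_2)$, which holds because $G\alpha\neq0$.

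To make this rigorous I must check that the stated normal equations $(K+\lambda I+\kappa G)\alpha=y$ are indeed the stationarity conditions of some objective of the form fit $+\lambda\cdot$ridge $+\kappa\cdot$penalty. If they are not, the fallback is a spectral computation. In the simultaneously-diagonalisable case, where $G$ commutes with $K$ (for example, when $G$ is a function of $K$, as is approximately true for stationary kernels), the residual equals $\sum_j y_j^2\bigl((\lambda+\kappa g_j)/(s_j+\lambda+\kappa g_j)\bigr)^2$. Each term in this sum is strictly increasing in $\kappa$ when $g_j>0$, and the same eigen-expansion gives $\mathrm{df}=\sum_j s_j/(s_j+\lambda+\kappa g_j)$, which recovers the first claim. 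I would present the monotonicity of $\mathrm{df}$ in full generality and the residual claim under this variational or commuting structure, flagging the assumption.
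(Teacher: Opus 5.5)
Your argument for $\mathrm{df}$ is correct, and it takes a different and stronger route than the paper's. The paper never differentiates in $\kappa$. It appeals to the heuristic that gradient regularisation acts like an enlarged length scale. It then argues on $\mathrm{df}(\ell)=\sum_j\sigma_j(\ell)/(\sigma_j(\ell)+\lambda)$ for a stationary Gram matrix $K_\ell$, using that $\tr K_\ell$ is fixed and that spectral mass migrates to the top eigenvalues as $\ell$ grows. Your identity $\mathrm{df}'(\kappa)=-\|G^{1/2}M^{-1}K^{1/2}\|_F^2$ needs neither the length-scale equivalence nor any unproved claim about how the spectrum of $K_\ell$ moves. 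It holds for every PSD $G$ and gives strictness under the explicit conditions $K\succ0$, $G\neq0$. The paper's route buys only the link to the length-scale/high-pass interpretation.

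For the residual, drop the variational route; it would fail even if the normal equations were the stationarity conditions of an objective of the form fit plus $\lambda$ times ridge plus $\kappa$ times penalty. Adding the two optimality inequalities then shows only two things: the penalty is nonincreasing, and the fit plus $\lambda$ times the ridge term is nondecreasing. The fit term alone is not controlled, because the ridge term can fall while the fit improves.

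More decisively, your own formula settles the question at $\kappa=0$, where $r=\lambda a$ with $a=\alpha(0)$:
\[
\frac{d}{d\kappa}\|r(\kappa)\|^2\Big|_{\kappa=0}=2\lambda\,a^\top K(K+\lambda I)^{-1}G\,a .
\]
This is sign-indefinite once $G$ does not commute with $K$. For example, take $K=\mathrm{diag}(1,0.01)$, $\lambda=0.01$, $G=vv^\top$ with $v=(1,1)^\top/\sqrt2$, and $a=(-1,1.5)^\top$. The derivative equals $-0.06$, so the residual initially decreases.

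So no argument that uses only $K\succ0$ and $G\succeq0$ can deliver the residual claim. Extra structure is necessary, such as your commuting hypothesis (or something specific to the kernel-derived $G$), and you are right to state it as an assumption. The paper's own justification (``each smoothing factor decreases'') tacitly works in a fixed eigenbasis in just this way. It is not literally true even there, since the top eigenvalues of $K_\ell$ grow with $\ell$. Your conditional statement is therefore no weaker than what the paper actually establishes.

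One small fix in your commuting case: strict increase also requires $y$ to have a nonzero component along some common eigenvector with $g_j>0$ and $s_j>0$.
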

\begin{remark}[High-pass filter interpretation]
	For translation-invariant kernels, gradient regularisation modifies the
	modewise response as
	\[
	s(\omega)
	=
	\frac{\widehat{\kappa}_\ell(\omega)}
	{\widehat{\kappa}_\ell(\omega)+\lambda+\kappa\|\omega\|^2}.
	\]
	For exponential kernels this is equivalent to increasing the effective
	length scale, with $\ell_{\mathrm{eff}}^2=\ell^2+c\,\kappa$ for a
	data-dependent constant $c>0$.
	As a result, the condition $\|x_0-\zeta\|\ll\ell$ required for the near-clone
	regime becomes harder to satisfy, reducing the range over which poisons can
	remain effective while inducing low input curvature.
\end{remark}
\begin{remark}[Linear kernels]
	For linear regression with $k(x,x')=x^\top x'$, the gradient regularisation term
	reduces to a rescaling of ridge regularisation, so that
	\[
	K+\lambda I+\kappa G
	=
	K+(\lambda+c\kappa)I
	\]
	for a constant $c>0$.
	Thus gradient regularisation is equivalent to increasing $\lambda$ and does not
	introduce any mode dependent suppression or length scale effect.
\end{remark}

\subsubsection{Eigenvector-selective contraction under gradient flow}

Define the input Fisher
\begin{equation}
	F(w)
	=
	\mathbb{E}\big[
	g_w(x)\,g_w(x)^\top
	\big],
	\qquad
	g_w(x)=\nabla_x L(w;x).
\end{equation}

\begin{theorem}[Exponential compression of large Fisher eigenmodes]
	\label{thm:fisherflow}
	Under gradient flow on $\mathcal{J}$, for any unit vector $v$,
	\begin{equation}
		E_v(t)
		=
		v^\top F(w_t) v
	\end{equation}
	is non-increasing.
	If there exists $\alpha>0$ such that
	\begin{equation}
		v^\top
		\bigl(
		\partial_w g_w(x)\,
		\partial_w g_w(x)^\top
		\bigr)
		v
		\ge
		\alpha
	\end{equation}
	on the support contributing to $E_v(t)$, then
	\begin{equation}
		E_v(t)
		\le
		E_v(0)\,
		\exp\!\bigl(
		-2\kappa \alpha t
		\bigr).
	\end{equation}
	Thus high-energy (poison-aligned) Fisher eigenvectors decay fastest.
\end{theorem}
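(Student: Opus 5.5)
The plan is to treat the statement as a Lyapunov-type argument along the flow $\dot w_t=-\nabla_w\mathcal{J}(w_t)$, isolating the contribution of the penalty term. I will make explicit the modelling simplifications (heuristic in nature) that the statement implicitly relies on.

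\textbf{Step 1: time derivative of $E_v$.} Write $h_v(w,x):=v^\top g_w(x)$, so that $E_v(w)=\mathbb{E}[h_v(w,x)^2]$. By the chain rule,
\[
\tfrac{d}{dt}E_v(t)=-2\,\mathbb{E}\big[h_v\,\nabla_w h_v\big]^\top\nabla_w\mathcal{J}(w_t).
\]
Next I split $\nabla_w\mathcal{J}=\nabla_w\mathbb{E}[L]+\kappa\,\mathbb{E}[(\partial_w g_w)^\top g_w]$. The analysis will focus on the regime in which the penalty drives the dynamics of the Fisher energy. Concretely, I will either assume that the data term's contribution to $\tfrac{d}{dt}E_v$ is non-positive or negligible (as it is near a data-fit equilibrium), or restrict to the component of the flow generated by the regulariser. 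This is the step I expect to be the main obstacle: monotonicity for \emph{every} $v$ cannot hold in full generality, since the data term can increase curvature. I would first try to state it as an assumption that the penalty dominates on the poison-aligned subspace, noting that this is exactly where Assumption~\ref{asmp:cluster} concentrates the gradient.

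\textbf{Step 2: penalty contribution is non-positive.} Restrict the penalty to direction $v$ by projecting it to $\tfrac{\kappa}{2}E_v(w)$, or equivalently decompose $F$ spectrally so that the penalty on the $v$-eigenmode is $\tfrac{\kappa}{2}E_v$. Then
\[
\tfrac{d}{dt}E_v=-\kappa\,\|\nabla_w E_v\|^2\le 0,
\]
which gives the monotonicity claim.

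\textbf{Step 3: exponential rate.} From $\nabla_w E_v=2\,\mathbb{E}[h_v\,\nabla_w h_v]$ and $\nabla_w h_v=(\partial_w g_w)^\top v$, I would lower-bound $\|\nabla_w E_v\|^2$ in terms of $E_v$. On the support, the hypothesis $v^\top\partial_w g_w\,\partial_w g_w^\top v\ge\alpha$ says $\|\nabla_w h_v\|^2\ge\alpha$. To pass from pointwise bounds to a bound on the expectation, I would work with the pointwise energies and write
\[
\tfrac{d}{dt}\mathbb{E}[h_v^2]\le-2\kappa\,\mathbb{E}\big[h_v^2\|\nabla_w h_v\|^2\big]\le-2\kappa\alpha\,E_v.
\]
This amounts to treating the per-sample gradient flow as decoupled across samples, i.e.\ a kernel/NTK diagonal approximation. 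Alternatively, I would assume the gradients $\nabla_w h_v(x)$ are aligned across the support, which makes the Cauchy--Schwarz step tight. Using $\tfrac{d}{dt}E_v\le-2\kappa\alpha\,E_v$, Gr\"onwall's inequality yields
\[
E_v(t)\le E_v(0)\,e^{-2\kappa\alpha t}.
\]

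\textbf{Step 4: conclusion.} Finally, I would argue that poison-aligned directions have large $\partial_w g_w$ energy. By Theorem~\ref{thm:spike}, the gradient there is $\nabla_x k\,S(m;\lambda)$ with large sensitivity to the poison coefficients $\boldsymbol{\alpha}_P$. These directions therefore admit the largest $\alpha$ and hence decay fastest.
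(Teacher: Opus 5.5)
Your overall route matches the paper's. You discard the data term, approximate the penalty-driven dynamics sample by sample, use the hypothesis as a lower bound, and close with Gr\"onwall. The per-sample decoupling you invoke in Step 3 is the same approximation the paper makes when it writes $\dot g_w(x)\approx-\kappa A(w,x)\,g_w(x)$ with $A=\partial_w g_w\,\partial_w g_w^\top$. The only difference is that you track the scalar $h_v=v^\top g_w$ instead of a matrix ODE for $F$.

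The step that fails as written is the ``equivalently'' in Step 2. Take an orthonormal basis $u_1=v,u_2,\dots$ of input space, so $\tr F=\sum_j E_{u_j}$. All summands depend on the same $w$, so
\[
\tfrac{d}{dt}E_v\big|_{\mathrm{pen}}=-\tfrac{\kappa}{2}\|\nabla_w E_v\|^2-\tfrac{\kappa}{2}\sum_{j\ge2}\langle\nabla_w E_v,\nabla_w E_{u_j}\rangle .
\]
The cross terms have no sign, even when $v$ is an eigenvector of $F$: that orthogonality lives in input space, while the coupling is between parameter-space gradients. Replacing the penalty by $\tfrac{\kappa}{2}E_v$ therefore proves monotonicity for a different objective, and with constant $\kappa/2$, not $\kappa$. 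The same term resurfaces in your decoupled Step 3. Under the true penalty, per sample,
$\tfrac{d}{dt}h_v^2=-2\kappa\,h_v\,v^\top Ag=-2\kappa\,h_v^2\,v^\top Av-2\kappa\,h_v\,v^\top Ag_\perp$ with $g_\perp=(I-vv^\top)g$. Only the first piece is controlled by $v^\top Av\ge\alpha$.

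The paper does not close this gap either; it drops the same term implicitly. It calls $\dot F=-\kappa(AF+FA)$ negative semidefinite, which is false in general for PSD $A,F$ (it holds, e.g., when they commute). Its final step $\mathbb{E}[(v^\top g)(v^\top Ag)]\ge\alpha\,v^\top Fv$ follows when $Av=av$ with $a\ge\alpha$ on the support, so that $v^\top Ag_\perp=0$. It does not follow from $v^\top Av\ge\alpha$ alone.

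So rather than asserting an equivalence, state the decoupling as an explicit hypothesis. Examples are: $v$ is an eigenvector of $A(w,x)$ with eigenvalue at least $\alpha$; or, more weakly, $\mathbb{E}[h_v\,v^\top Ag_\perp]\ge0$. Under either, your Step 3 and Gr\"onwall go through verbatim, and the ``every unit $v$'' monotonicity should be read under the same hypothesis. Your scalar formulation has the merit that the stated condition $v^\top Av=\|\nabla_w h_v\|^2\ge\alpha$ is then exactly what gets used. What does hold for the penalty-driven flow with no decoupling is monotonicity of the total energy: $\tfrac{d}{dt}\tr F\big|_{\mathrm{pen}}=-\tfrac{\kappa}{2}\|\nabla_w\tr F\|^2\le0$.

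Finally, your Cauchy--Schwarz alternative needs more than aligned gradients. Cauchy--Schwarz bounds $\|\mathbb{E}[h_v\nabla_w h_v]\|^2$ from above. A lower bound by $\alpha E_v$ therefore also requires $h_v$ to be proportional to the common gradient magnitude, so that there is no cancellation across samples.
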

\begin{remark}[Effect of gradient regularisation on poisoning]
	Cloned and backdoor poisons act by concentrating sensitivity into a small number of input space directions.
	Input gradient regularisation contracts those directions by reducing capacity and suppressing large Fisher modes.
	This directly weakens the effect of such poisons.
\end{remark}

\begin{remark}[Relation to adversarial training]
	\label{rem:advtrain}
	To first order,
	\begin{equation}
		\max_{\|\delta x\|_2 \le \varepsilon}
		L(x+\delta x,w)
		=
		L(x,w)
		+
		\varepsilon\,\|\nabla_x L(x,w)\|_2
		+
		O(\varepsilon^2),
	\end{equation}
	so $L_2$ adversarial training penalises the input--gradient norm.
	We instead penalise
	\begin{equation}
		\|\nabla_x L(x,w)\|_2^2
		=
		\sum_i (\partial_{x_i} L)^2
		=
		\tr\!\Big(
		\mathbb{E}\big[\nabla_x L\,\nabla_x L^\top\big]
		\Big),
	\end{equation}
	the trace of the input Fisher.
	This choice controls total sensitivity energy rather than a single worst--case direction
	and admits a spectral decomposition, enabling explicit analysis and suppression of
	poison--aligned Fisher eigenmodes.
\end{remark}

\section{Linear Regression Experiments}
For linear regression (App. \ref{lem:qr-fwl})), one can explicitly derive the impact of data poisoning on the coefficient vector $\beta$. The key insight is to consider training a model on the poison data without the poison feature. Then the poison feature is added via QR decomposition. Formally, the update is described by the following formulae.
	\begin{equation*}\label{eq:betanew}
		\beta_{\mathrm{new}}
		=\frac{x_{\mathrm{new}}^\top e}{x_{\mathrm{new}}^\top M_X x_{\mathrm{new}}},
		\beta'_{\mathrm{old}}=\hat\beta-(X^\top X)^{-1}X^\top x_{\mathrm{new}}\,\beta_{\mathrm{new}},
			\vspace{-15pt}
	\end{equation*}
\begin{figure}[!h]
	\centering
	\begin{minipage}{0.24\linewidth}
		\begin{subfigure}{\linewidth}
			\centering
			\includegraphics[width=\linewidth]{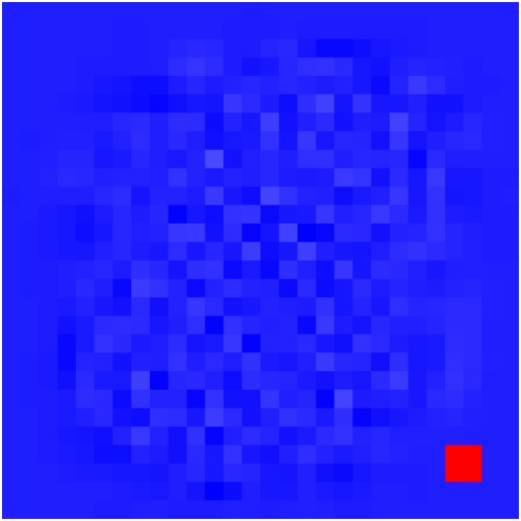}
			\caption{full–base}
			\label{subfig:class0_full_minus_base}
		\end{subfigure}
	\end{minipage}
	\hfill
	\begin{minipage}{0.24\linewidth}
		\begin{subfigure}{\linewidth}
			\centering
			\includegraphics[width=\linewidth]{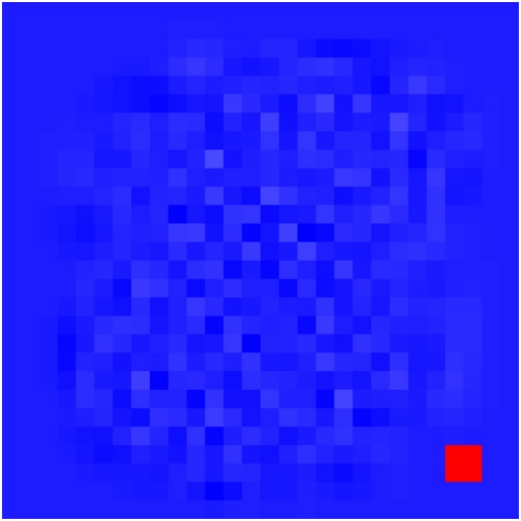}
			\caption{step–base}
			\label{subfig:class0_step_minus_base}
		\end{subfigure}
	\end{minipage}
	\hfill
	\begin{minipage}{0.24\linewidth}
		\begin{subfigure}{\linewidth}
			\centering
			\includegraphics[width=\linewidth]{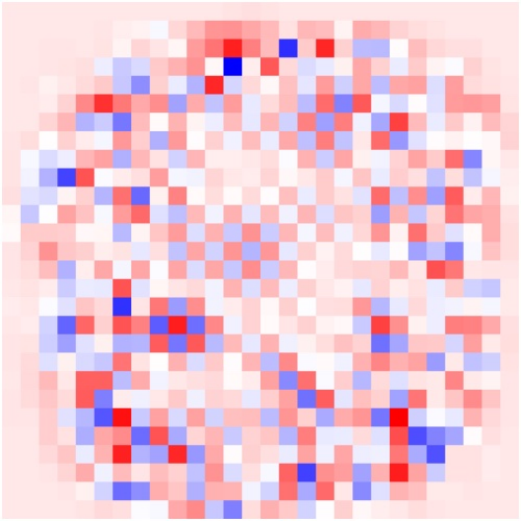}
			\caption{step–full}
			\label{subfig:class0_step_minus_full}
		\end{subfigure}
	\end{minipage}
	\hfill
	\begin{minipage}{0.24\linewidth}
		\begin{subfigure}{\linewidth}
			\centering
			\includegraphics[width=\linewidth,clip,trim={0cm 0cm 12.0cm 0cm}]{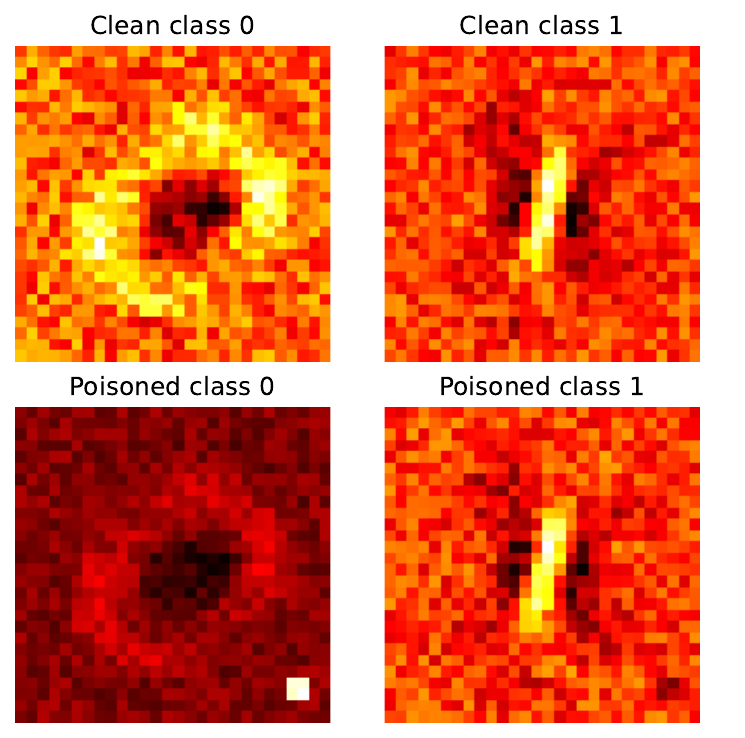}
			\caption{Weights}
			\label{subfig:linreg_weights}
		\end{subfigure}
	\vspace{-10pt}
	\end{minipage}
	
	\caption{MNIST where we have a poison target class $0$ activated upon a $4$px square in the lower right of the image, where we use a poison fraction $\theta=0.1$ and the base is an un-poisoned model, full is when we retrain SGD with the poison and step refers to when we update the model using QR stepwise decomposition with the new poison feature. : (a) full–base, (b) step–base, (c) step–full, (d) weights.}
	\label{fig:step_weights_analysis_fourwide}
	\vspace{-6pt}
\end{figure}
\begin{figure}[!h]
	\centering
	\begin{minipage}{0.32\linewidth}
		\begin{subfigure}{\linewidth}
			\centering
			\includegraphics[width=\linewidth]{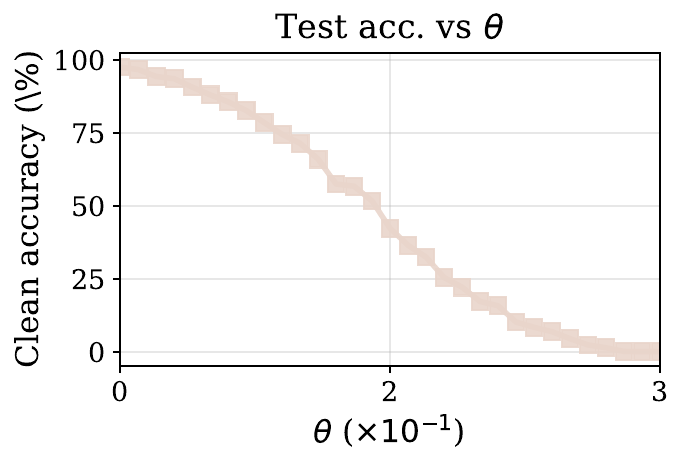}
			\caption{Test Acc.}
			\label{subfig:clean_acc_vs_poison}
		\end{subfigure}
	\end{minipage}
	\hfill
	\begin{minipage}{0.32\linewidth}
		\begin{subfigure}{\linewidth}
			\centering
			\includegraphics[width=\linewidth]{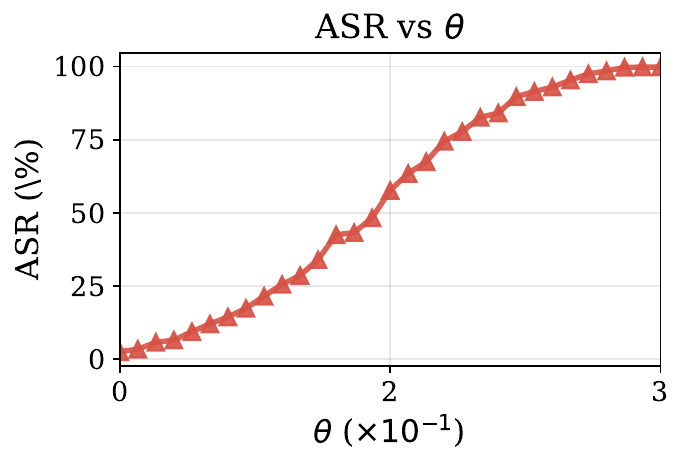}
			\caption{ASR}
			\label{subfig:poison_success_vs_poison}
		\end{subfigure}
	\end{minipage}
	\hfill
	\begin{minipage}{0.32\linewidth}
		\begin{subfigure}{\linewidth}
			\centering
			\includegraphics[width=\linewidth]{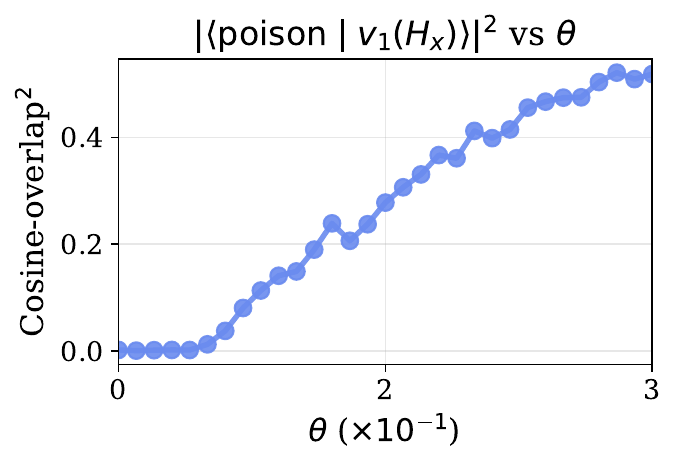}
			\caption{Overlap}
			\label{subfig:overlap_vs_poison}
		\end{subfigure}
	\end{minipage}
	\hfill
	
	\caption{RegressionMNIST poisoning analysis: (a) clean accuracy degradation, (b) poison success rate, (c) spectral overlap with poison direction, and (d) overall summary metrics as a function of poison fraction.}
	\vspace{-10pt}
	\label{fig:regressionmnist_poison_analysis}
\end{figure}
\begin{figure*}[!t]
	\centering
	
	\begin{subfigure}{0.49\linewidth}
		\centering
		\includegraphics[width=\linewidth]{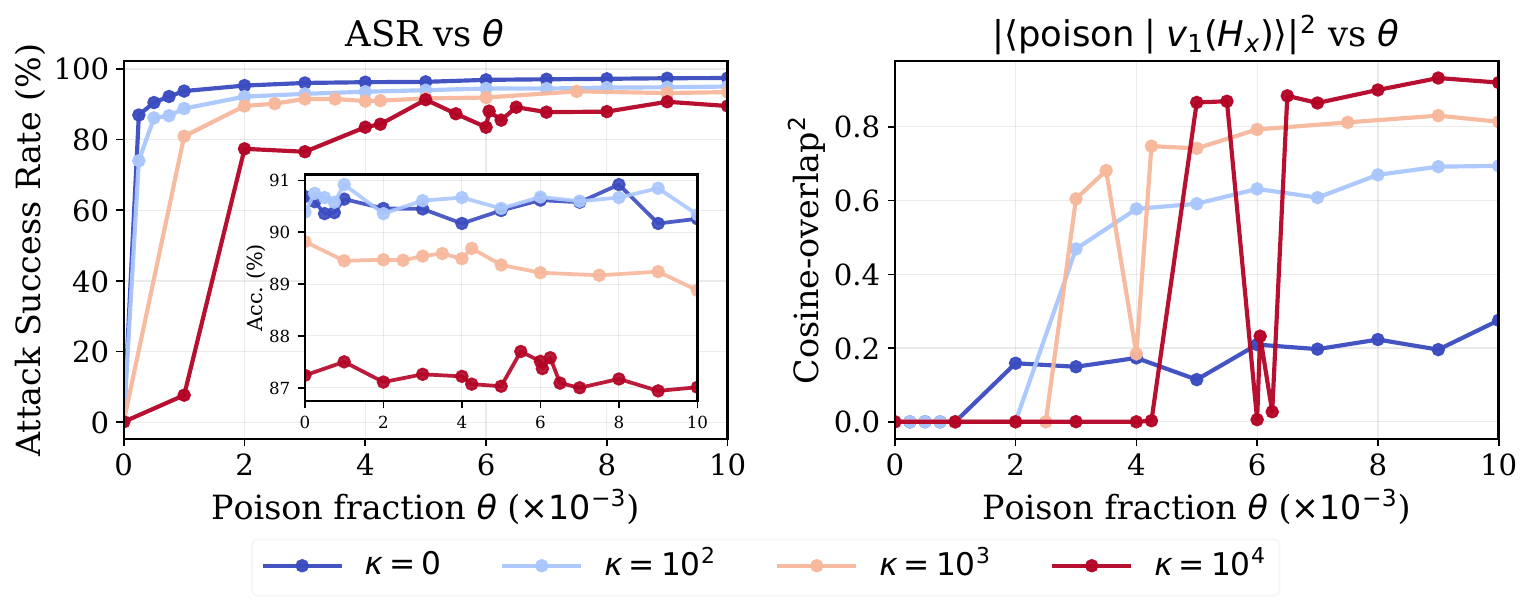}
		\caption{
			\textbf{CIFAR-10.} No data augmentation.
		}
	\end{subfigure}\hfill
	\begin{subfigure}{0.49\linewidth}
		\centering
		\includegraphics[width=\linewidth]{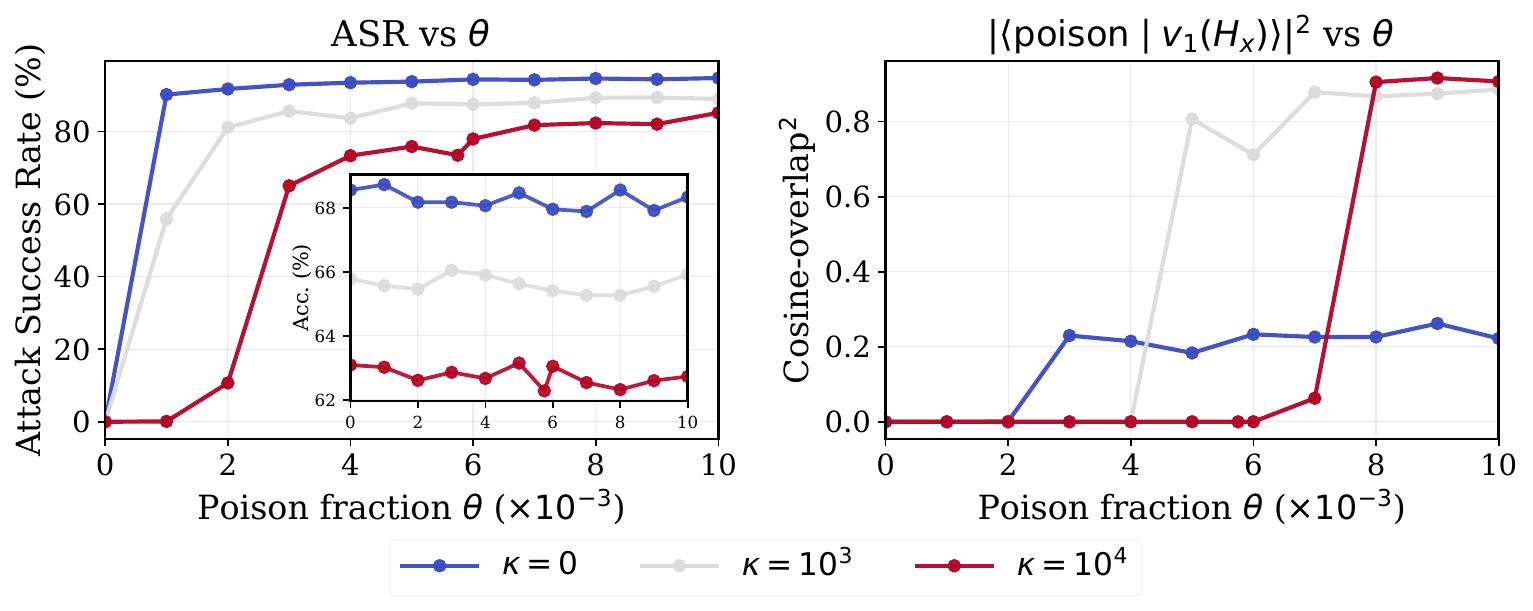}
		\caption{
			\textbf{CIFAR-100.} No data augmentation.
		}
	\end{subfigure}
	
	\caption{
		Attack success rate (left) and cosine-overlap$^{2}$ (right) versus poison fraction $\theta$
		for $\kappa \in \{0, >100\}$. Inset: clean test accuracy.
		All axes use $10^{-3}$ scaling for poison fraction $\theta$.
		Legend indicates gradient regularisation strength $\kappa$.
	}
	\label{fig:combined_asr_insetacc_overlap_both}
\end{figure*}
\begin{figure*}[!h]
	\centering
	
	\begin{subfigure}{0.49\linewidth}
		\centering
		\includegraphics[width=\linewidth]{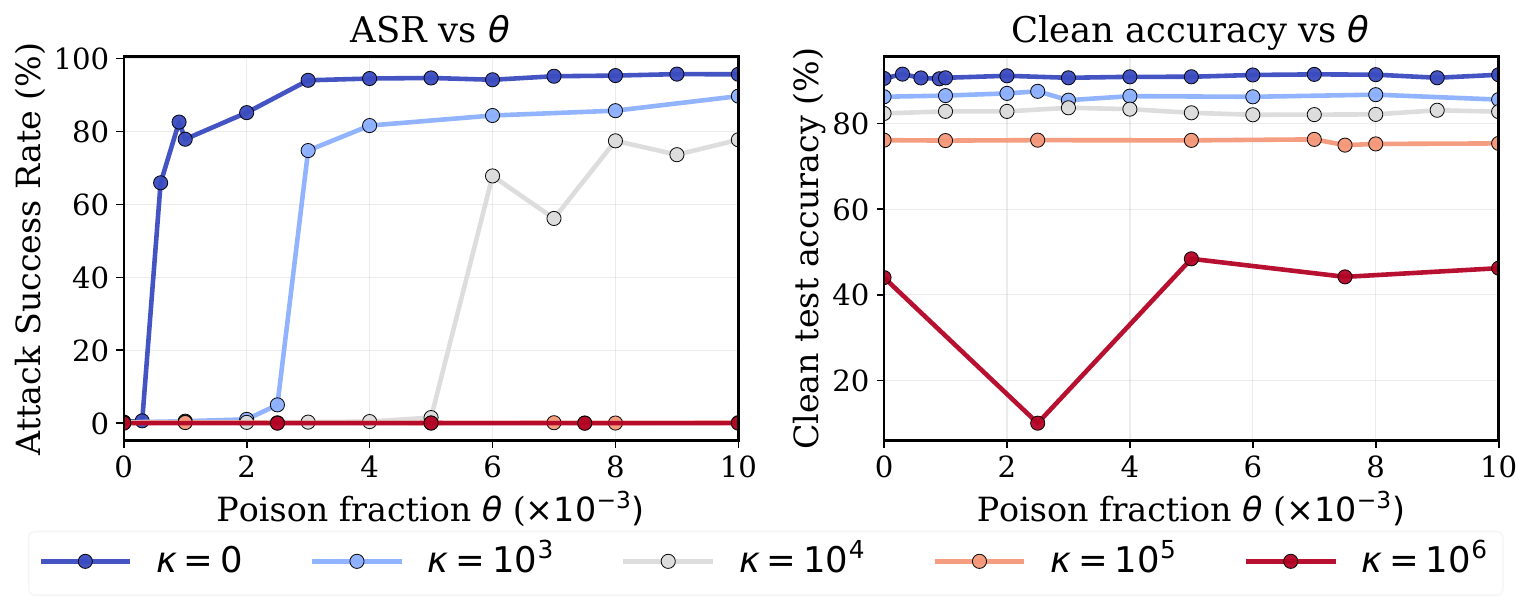}
		\caption{
			\textbf{CIFAR-10.} Data augmentation.
		}
	\end{subfigure}\hfill
	\begin{subfigure}{0.49\linewidth}
		\centering
		\includegraphics[width=\linewidth]{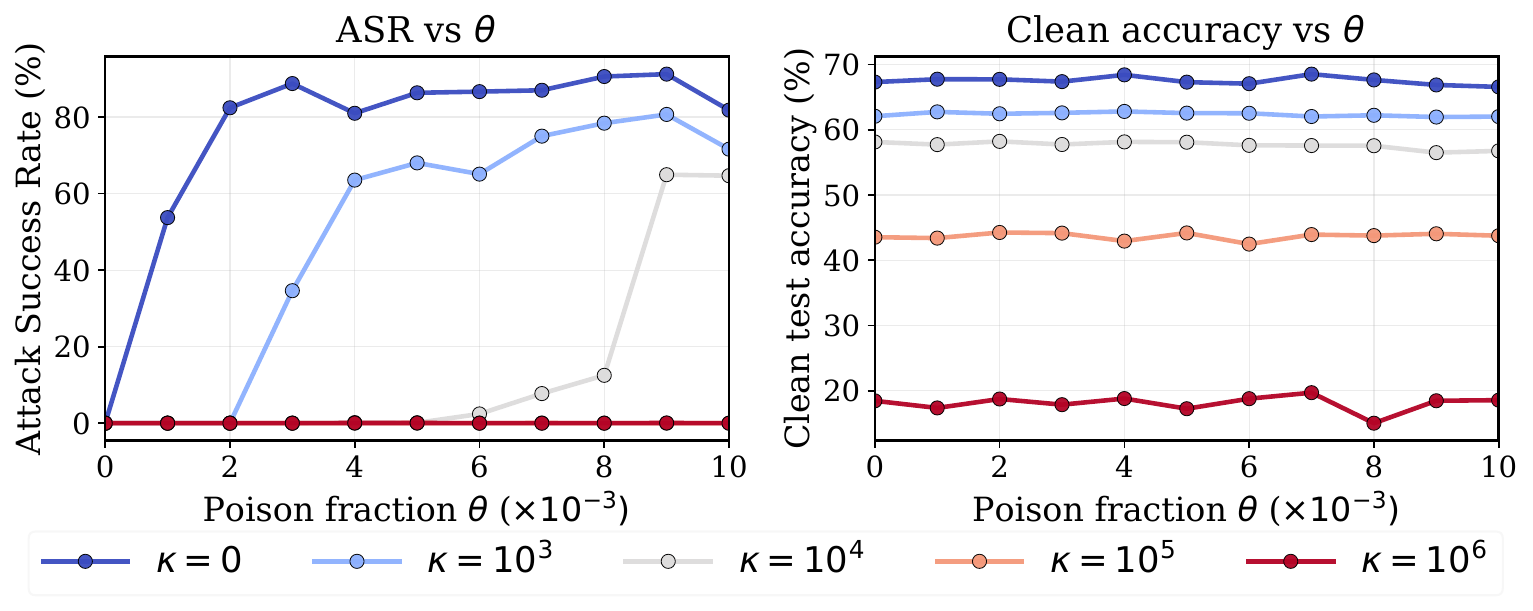}
		\caption{
			\textbf{CIFAR-100.} Data augmentation.
		}
	\end{subfigure}
	
	\caption{
		Attack success rate (left) and clean accuracy (right) versus poison fraction $\theta$
		for $\kappa \in \{0, >100\}$ (augmentation enabled, 90 epochs).
		The shared legend indicates gradient regularisation strength $\kappa$.
	}
	\label{fig:combined_asr_cleanacc_vs_theta_both}
\end{figure*}

We run the rank-$1$ poison square of \citet{granziol2025linearapproachdatapoisoning,gu2017badnets} on MNIST. We convert the output to a one hot label, which we predict using the argmax of the output giving an accuracy of $85\%$ similar to that of Logistic regression ($90\%$). This demonstrates that the experimental gap between classification in practice and regression in theory is small from a performance standpoint. As shown in
Figure \ref{fig:step_weights_analysis_fourwide}, our QR stepwise decomposition approach to predicting the changing regression weights, is a near perfect match to the real trained from scratch SGD reality. The difference between the two (Fig \ref{subfig:class0_step_minus_full}) resembling a small noise vector, with the differences in coefficient vector $\beta$ (Fig \ref{subfig:class0_full_minus_base}, \ref{subfig:class0_step_minus_base}) indistinguishable and strongly aligned with the poison. The poison target class is dominated by the poison Figure \ref{subfig:linreg_weights}. As shown in Figure \ref{fig:regressionmnist_poison_analysis}, for the poisoned linear model, test accuracy/attack success ratio slowly decreases/increases with increased poison fraction.

\section{Deep Neural Networks}
\label{sec:deepnetexp}
We run CIFAR-$10(100)$ experiments on the Pre-Residual-$110$ layer variant \citep{he2016deep}. We train from scratch, using a learning rate $\alpha=0.1$, momentum $\beta=0.9$ and a $\rho=0.1$ learning rate decay every $e//3$ epochs, where $e \in \{90,450\}$.  Augmentations are random $28\times 28$ crop and flip. All images were normalised including those with the poison. $\kappa \propto |\nabla^{2}_{x}L|^{2}$ was added to the loss before back-propagation by the appropriate torch.autograd, with retaining the graph. The hessian top eigenvectors are calculated using a custom vectorised hessian vector product implementation and the Lanczos algorithm \citep{lanczos1950iteration,gardner2018gpytorch}. 
We consider two key potential attack variants
\begin{itemize}
	\item (Stochastic Poison) The attacker has infiltrated the training pipeline, the defender has only access to the clean data and the poisoned model.
	\item (Deterministic Poison) The attacker has infiltrated the training data, the defender has access to the dirty data and poisoned model.
\end{itemize}
\begin{figure*}[!t]
	\centering
	
	\begin{subfigure}{0.12\textwidth}
		\includegraphics[width=\linewidth]{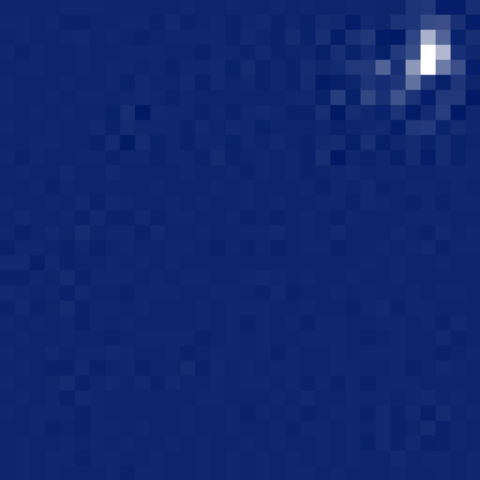}
		\caption{$\{1,0\}$}
		\label{subfig:augfalse1}
	\end{subfigure}\hfill
	\begin{subfigure}{0.12\textwidth}
		\includegraphics[width=\linewidth]{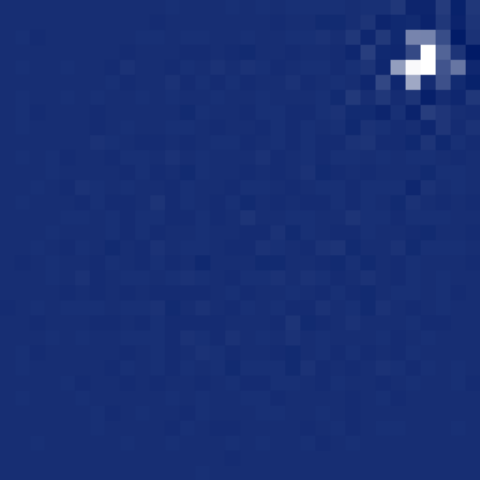}
		\caption{$\{2,10^2\}$}
		\label{subfig:augfalse2}
	\end{subfigure}\hfill
	\begin{subfigure}{0.12\textwidth}
		\includegraphics[width=\linewidth]{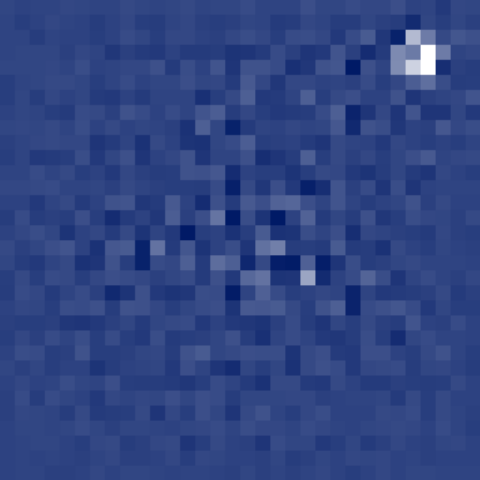}
		\caption{$\{2,10^3\}$}
		\label{subfig:augfalse3}
	\end{subfigure}\hfill
	\begin{subfigure}{0.12\textwidth}
		\includegraphics[width=\linewidth]{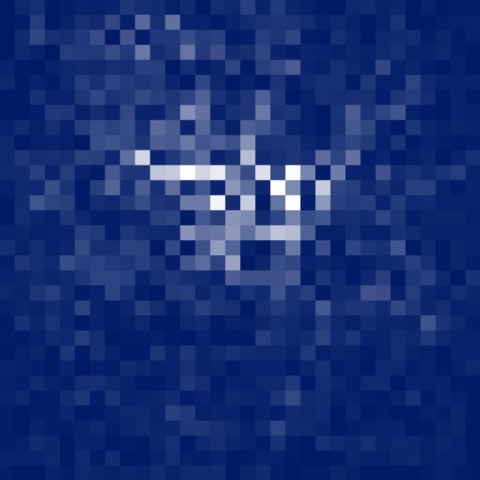}
		\caption{$\{1,10^3\}$}
	\end{subfigure}\hfill
	\begin{subfigure}{0.12\textwidth}
		\includegraphics[width=\linewidth]{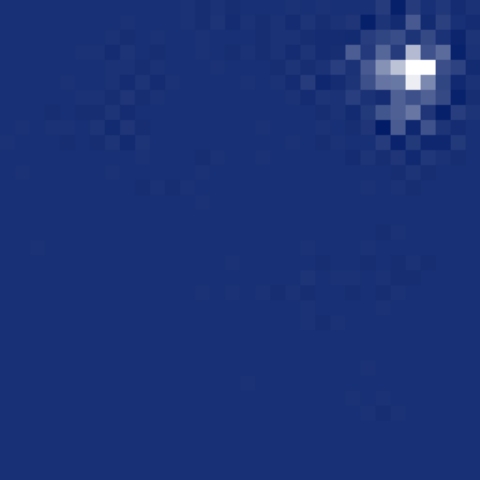}
		\caption{$\{4,0\}$}
	\end{subfigure}\hfill
	\begin{subfigure}{0.12\textwidth}
		\includegraphics[width=\linewidth]{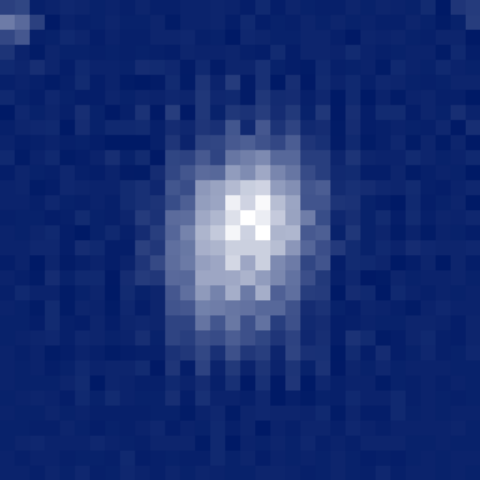}
		\caption{$\{8,10^2\}$}
		\label{subfig:augtrueshift1}
	\end{subfigure}\hfill
	\begin{subfigure}{0.12\textwidth}
		\includegraphics[width=\linewidth]{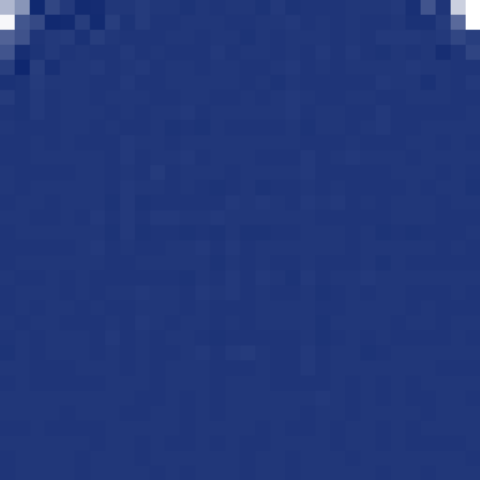}
		\caption{$\{12,10^3\}$}
		\label{subfig:augtrueshift2}
	\end{subfigure}\hfill
	\begin{subfigure}{0.12\textwidth}
		\includegraphics[width=\linewidth]{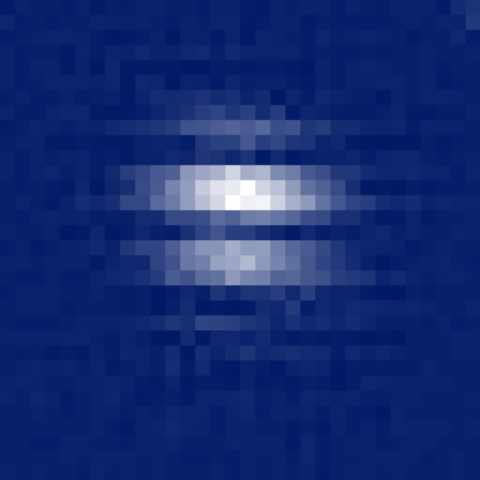}
		\caption{$\{8,10^3\}$}
	\end{subfigure}
	
	\caption{
		Top eigenvectors $v_1(H_x)$ across $(\theta \times 10^{-4}, \kappa)$.
		Left four: augmentation disabled.
		Right four: augmentation enabled. (d) and (h) represent cases of where no poison is visible.
	}
	\label{fig:eig_theta_kappa_8wide}
\end{figure*}

\begin{figure*}[!htbp]
	\centering
	\begin{subfigure}{0.49\linewidth}
		\centering
		\includegraphics[width=\linewidth]{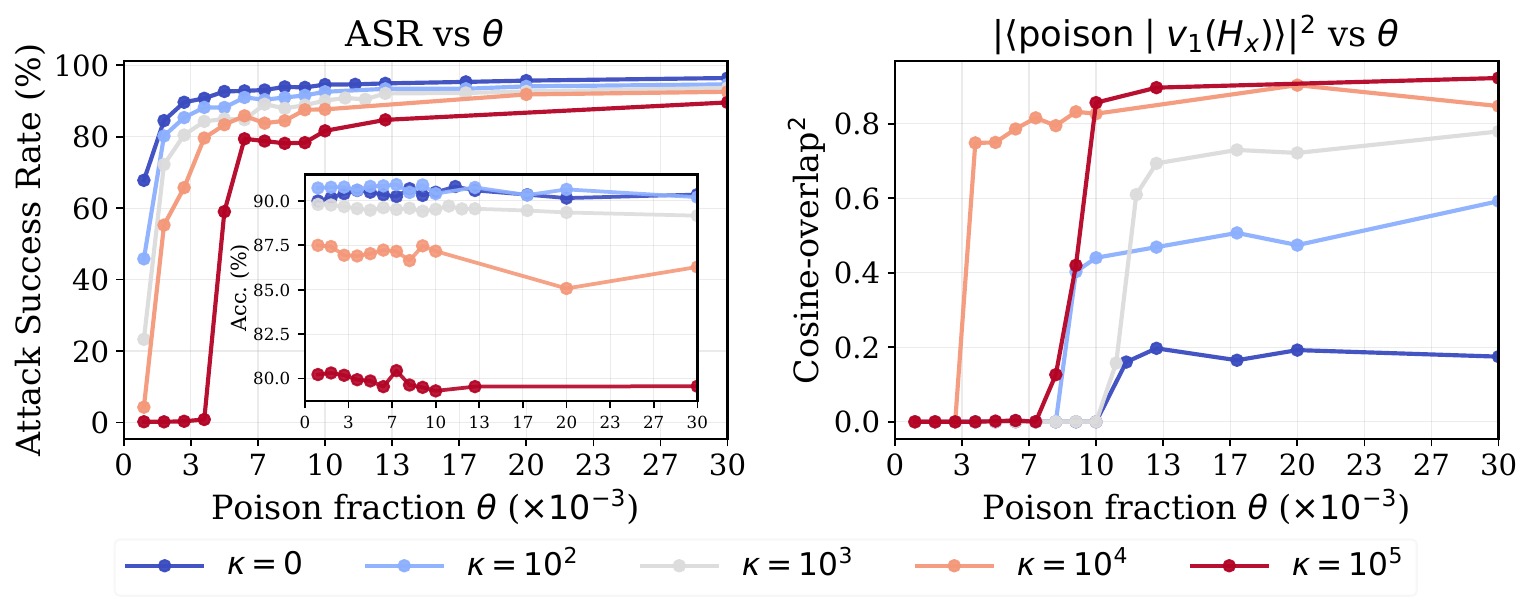}
		\caption{
			\textbf{CIFAR-10.}
		}
	\end{subfigure}
	\begin{subfigure}{0.49\linewidth}
		\centering
		\includegraphics[width=\linewidth]{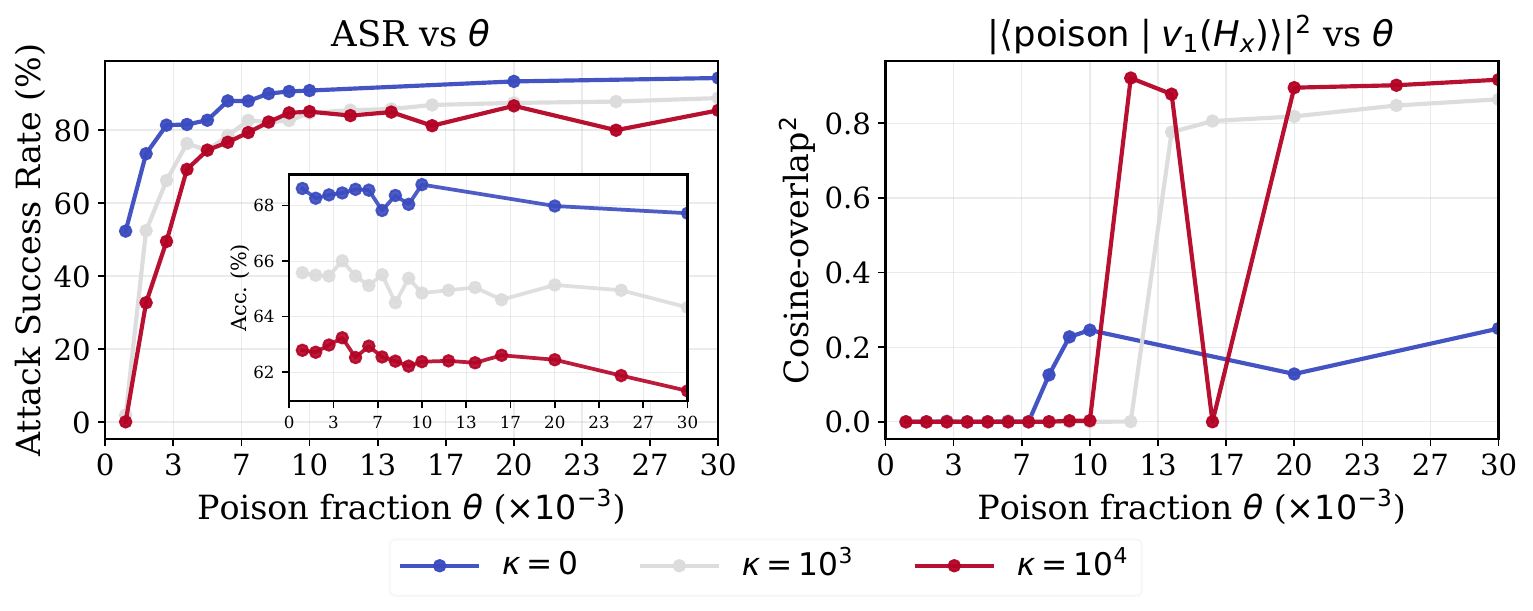}
		\caption{
			\textbf{CIFAR-100.}
		}
	\end{subfigure}\hfill
	
	\caption{
		Non-stochastic training. Attack success rate (left) and cosine-overlap$^{2}$ (right) versus poison fraction $\theta$
		for $\kappa \in \{0, >1000\}$ (augmentation disabled, 90 epochs).
		Inset: clean test accuracy.
		All axes use $10^{-3}$ scaling for poison fraction $\theta$.
		The shared legend indicates gradient regularisation strength $\kappa$.
	}
	\label{fig:combined_asr_overlap_nonstochastic_both}
\end{figure*}
\subsection{Stochastic Rank-1 Additive Poison}
\label{subsec:poisorig}
We adapt the small cross from  \citet{gu2017badnets} into an L shape and use method A for poisoning. Figure \ref{fig:combined_asr_insetacc_overlap_both} shows the attack success ratio (ASR) and top eigenvector cosine overlap with varying poison fraction $\theta$ for various regularisation strengths $\kappa$ for CIFAR-$10$($100$). We summarise the key findings specifically for CIFAR-$100$ in Figure \ref{fig:square_combined_clean_asr_new}. 
\begin{figure}[!htbp]
	\centering
	\begin{minipage}{0.49\linewidth}
		\begin{subfigure}{\linewidth}
			\centering
			\includegraphics[width=\linewidth]{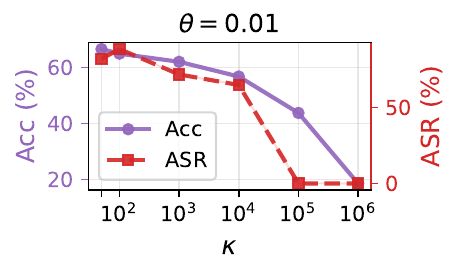}
			\caption{Accuracy vs ASR}
			\label{subfig:c100}
		\end{subfigure}
	\end{minipage}
	\hfill
	\begin{minipage}{0.49\linewidth}
		\begin{subfigure}{\linewidth}
			\centering
			\includegraphics[width=\linewidth]{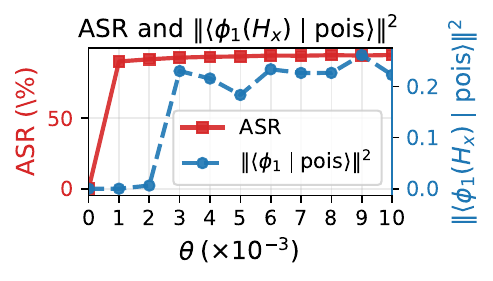}
			\caption{Delay in Poison Visibility}
			\label{subfig:c1002}
		\end{subfigure}
	\end{minipage}
	
	\caption{Key experimental findings for Deep Resnets on CIFAR-$100$. There is a clear trade off between improving the robustness to data poisoning and accuracy and there is a gap between poisoning detectability and poison efficacy in the attackers favour.}
	\label{fig:square_combined_clean_asr_new}
\end{figure}
ASR increases/decreases with $\theta,\kappa$ and there is a lag between ASR effectiveness and cosine overlap becoming non-trivial. Increasing $\kappa$ always reduces test set performance. In section \ref{subsec:poisnonstochastic} we investigate the late stage collapse in the overlap of the poison with $v_{1}(H_{x}$) but not the corresponding poison efficacy.

Figure \ref{fig:combined_asr_cleanacc_vs_theta_both} shows that data augmentation significantly improves the position of the defender.  \textit{once regularisation is employed} the poison fraction $\theta$ must be increased significantly to achieve an effective poison. As shown in Figures \ref{subfig:augtrueshift1}, \ref{subfig:augtrueshift2}, beyond a certain $\theta$ the top eigenvector of the input Hessian does display the poison trigger, but not necassarily in the same spot. This is in contrast to the no augmentation case, where as evidenced by Figures \ref{subfig:augfalse1}, \ref{subfig:augfalse2}, \ref{subfig:augfalse3}, (a noised version of) the poison appears in the identical location.

\begin{table}[h]
	\centering
	\caption{Results at $\theta=0.02$, Aug=True. All values are percentages (2 d.p.). E denotes epoch count.}
	\label{tab:kappa_1e4_3e4_1e5}
	\begin{tabular}{lccccccc}
		\toprule
		& \multicolumn{2}{c}{$\kappa=10^4$} 
		& \multicolumn{2}{c}{$\kappa=3\times 10^4$}
		& \multicolumn{2}{c}{$\kappa=10^5$} \\
		\cmidrule(lr){2-3}
		\cmidrule(lr){4-5}
		\cmidrule(lr){6-7}
		E & Acc & ASR & Acc & ASR & Acc & ASR \\
		\midrule
		90  & 80.85 & 82.10 & 79.18 & 71.96 & 75.17 & 6.63 \\
		450 & 84.19 & 88.18 & 82.28 & 81.99 & 79.05 & 70.68 \\
		\bottomrule
	\end{tabular}
\end{table}
We also consider, as shown in Table \ref{tab:kappa_1e4_3e4_1e5}, whether increased training (increasing the total number of epochs from $90 \rightarrow 450$) can close the efficacy gap from this form of regularisation. We find that whilst extra training does increase the training accuracy, we find a corresponding increase in poison efficacy also. Specifically we note that for $\theta=0.02, \kappa=10^5$, for a small ($\approx 2\%$) improvement in accuracy a near ineffective poison is boosted to nearly $70\%$ efficacy. We do see evidence of improvement of the Pareto frontier.
%

\subsection{Stochastic Warp Poison}
\label{subsec:warp}
We implement the imperceptible warp \citep{nguyen2021wanet} with strength $\phi=0.02$, the rest as in Section \ref{subsec:poisorig}. We summarise the findings in Figure \ref{fig:warp_combined_clean_asr}. Regularising with greater intensity, performance degrades but less than poison efficacy. We also visualise the top eigenvector of $\nabla^{2}_{x}L$ in Figure \ref{fig:eig_warp}, which displays a grid pattern even for small $\theta$, lost once the poison is no longer effective. Anecotally, such a method of poisoning which is harder to detect by the eye alone is more easily detected by our spectral methods.
\begin{figure}[!htbp]
	\centering
	\begin{minipage}{0.49\linewidth}
		\begin{subfigure}{\linewidth}
			\centering
			\includegraphics[width=\linewidth]{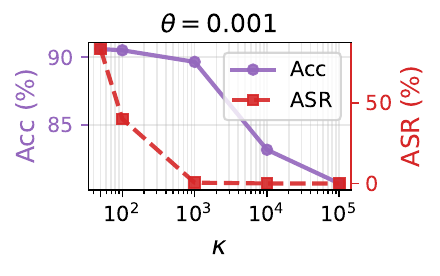}
			\caption{Poison fraction $=0.001$}
			\label{subfig:warp_combined_0p001}
		\end{subfigure}
	\end{minipage}
	\hfill
	\begin{minipage}{0.49\linewidth}
		\begin{subfigure}{\linewidth}
			\centering
			\includegraphics[width=\linewidth]{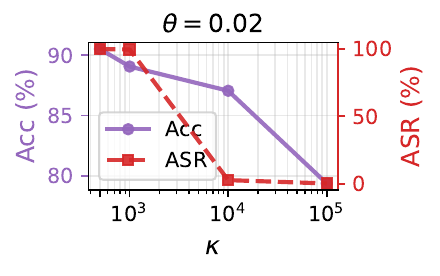}
			\caption{Poison fraction $=0.02$}
			\label{subfig:warp_combined_0p02}
		\end{subfigure}
	\end{minipage}
	\caption{Clean accuracy and ASR comparison for warp-poisoned models at two poison fractions ($0.001$ and $0.02$) Where $\kappa=10$ corresponds to $\kappa=0$.}
	\label{fig:warp_combined_clean_asr}
\vspace{-10pt}
\end{figure}
\begin{figure}[!htbp]
	\centering
	\begin{subfigure}{0.24\linewidth}
		\IfFileExists{eigenvector_panels_warp/warp_0.001_k0_augFalse.pdf}{
			\includegraphics[width=\linewidth]{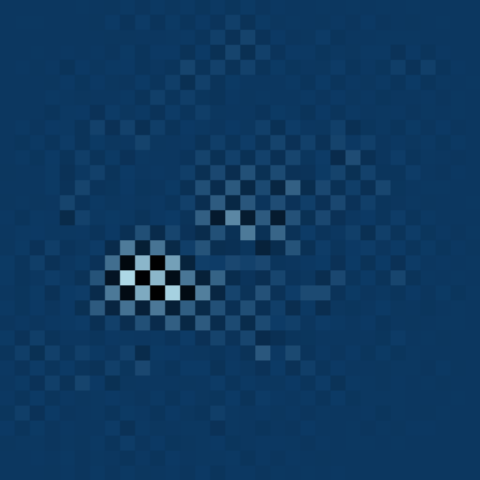}
		}{\fbox{\rule{0pt}{2cm}\rule{2cm}{0pt}Missing}}
		\caption{$\kappa{=}0$}
	\end{subfigure}\hfill
	\begin{subfigure}{0.24\linewidth}
		\IfFileExists{eigenvector_panels_warp/warp_0.001_k100.0_augFalse.pdf}{
			\includegraphics[width=\linewidth]{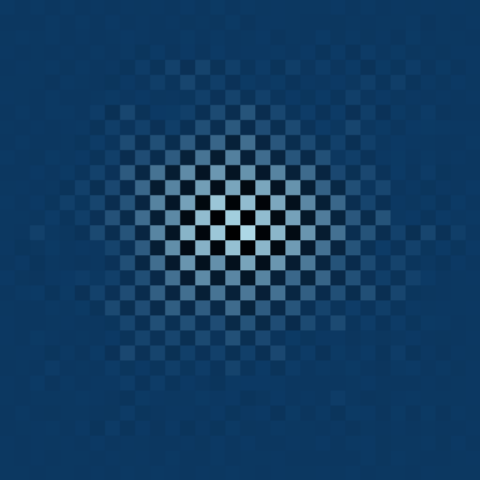}
		}{\fbox{\rule{0pt}{2cm}\rule{2cm}{0pt}Missing}}
		\caption{$\kappa{=}10^2$}
	\end{subfigure}\hfill
	\begin{subfigure}{0.24\linewidth}
		\IfFileExists{eigenvector_panels_warp/warp_0.001_k1000.0_augFalse.pdf}{
			\includegraphics[width=\linewidth]{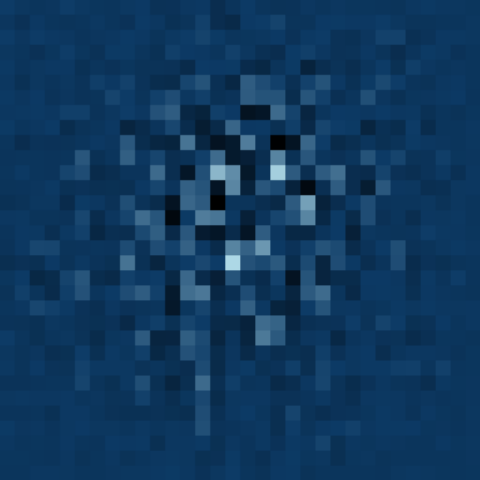}
		}{\fbox{\rule{0pt}{2cm}\rule{2cm}{0pt}Missing}}
		\caption{$\kappa{=}10^3$}
	\end{subfigure}\hfill
	\begin{subfigure}{0.24\linewidth}
		\IfFileExists{eigenvector_panels_warp/warp_0.001_k100000.0_augFalse.pdf}{
			\includegraphics[width=\linewidth]{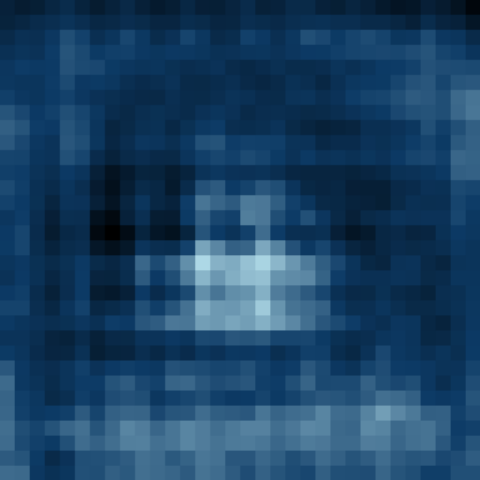}
		}{\fbox{\rule{0pt}{2cm}\rule{2cm}{0pt}Missing}}
		\caption{$\kappa{=}10^5$}
	\end{subfigure}
	\caption{Top eigenvectors $v_1(H_x)$ for $\theta{=}0.001$ warp poison, augmentation disabled,increasing $\kappa$.}
	\label{fig:eig_warp}
\end{figure}
\subsection{Deterministic Rank-1 Additive Poison}
\label{subsec:poisnonstochastic}
We find as shown in Figure \ref{fig:combined_asr_overlap_nonstochastic_both} similar qualitative results to the stochastic case. Attack success ratio is a little worse, we see less consistent ordering in onset of spectral markers as a function of $\kappa$ (no longer monatonic). We investigate the cause of the collapse of top eigenvector and poison overlap for the determinstic case in CIFAR-$100$ by positing, whether the poison might be rotating between the various eigenvectors of the Hessian. We plot $\textit{argmax}_{k}|<v_{k}(H_{x})|poison>|^{2}$ in Figure \ref{fig:combined_asr_insetacc_overlap_c100_nonstochastic_correct overlap} for which we find monotonic increase in $\theta$. We display the max overlap indices for a select set of indices in Table \ref{tab:clean_only_max_overlaps}. This confirms the poison rotation throughout the eigenvectors hypothesis.
\begin{table}[h]
	\centering
	\small
	\begin{tabular}{ccccccccc}
		\toprule
		$\kappa$ & $\theta$ & $max\ overlap$ & $max\ overlap\ index$\\
		\midrule
		0 & 0.01 & 0.4956 & 0 \\
		& 0.02 & 0.3765 & 1 \\
		& 0.03 & 0.4999 & 0 \\
		\midrule
		$10^4$ & 0.008 & 0.6666 & 2 \\
		& 0.009 & 0.7904 & 1 \\
		& 0.01 & 0.9363 & 1 \\
		& 0.02 & 0.9463 & 0 \\
		\bottomrule
	\end{tabular}
	\vspace{5pt}
	\caption{CIFAR-$10$ non stochastic poison overlap with poison fraction $\theta$ and eigenvector index of largest overlap}
	\label{tab:clean_only_max_overlaps}
\end{table}


\begin{figure}[!h]
	\centering
	\includegraphics[width=0.95\linewidth]{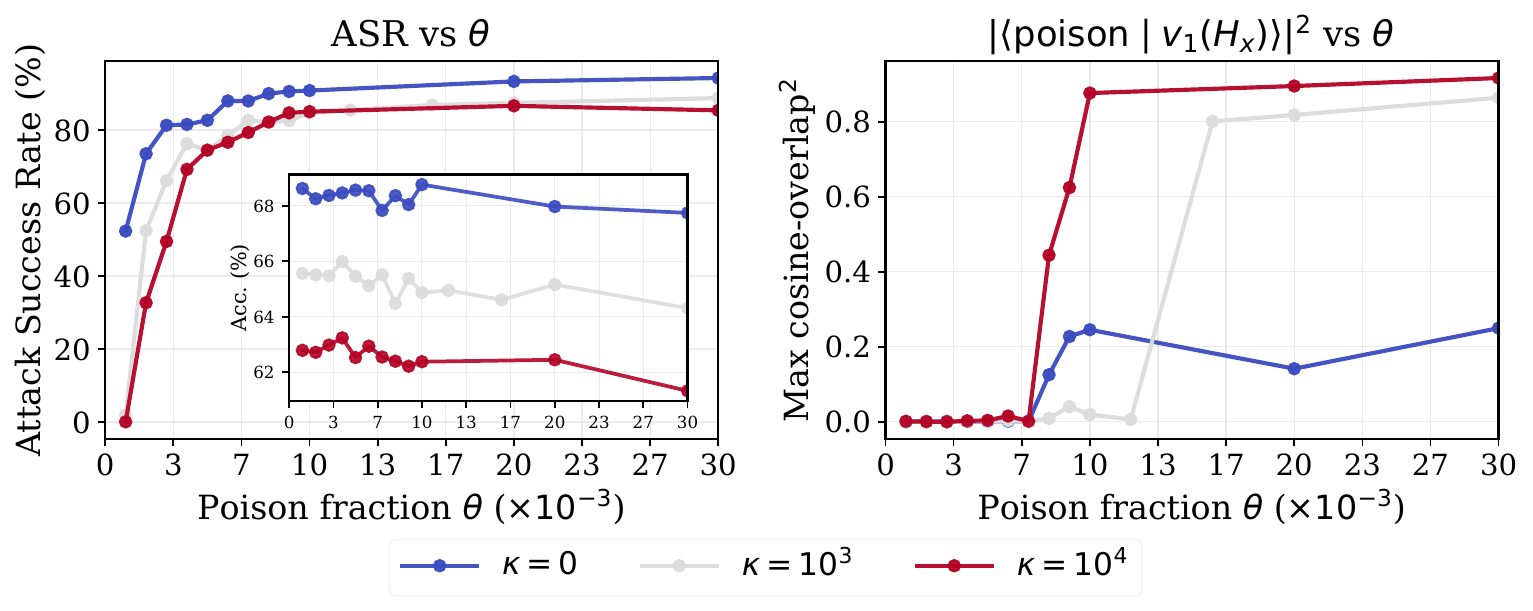}
	\caption{
		CIFAR-$100$ Combined view of attack success rate (left) and max-cosine-overlap$^{2}$ (right) versus poison fraction $\theta$ for $\kappa \in \{0, >1000\}$, (augmentation disabled, 90 epochs). 
		The inset in the left panel shows the corresponding clean test accuracy (bottom right). 
		All axes use the $10^{-3}$ scaling for poison fraction, and the shared legend below indicates the gradient regularisation $\kappa$ values.
	}
	\label{fig:combined_asr_insetacc_overlap_c100_nonstochastic_correct overlap}
\end{figure}

\paragraph{Reproducibility.}
All hyperparameters, poisoning procedures, and Hessian computations are described in detail in the appendix.

\section{Conclusion}
\begin{itemize}
	\item \textbf{Theoretical takeaway.}
	$|\nabla_x L|^2$ regularisation provably weakens data-fitting capacity while protecting against data poisoning, revealing an unavoidable safety--efficacy trade-off.
	
	\item \textbf{Practical takeaway.}
	Increased training combined with $|\nabla_x L|^2$ regularisation and data augmentation can yield models that are both highly accurate and robust to data poisoning.
\end{itemize}

In this paper we developed a unified theoretical and empirical framework for understanding data-poisoning and backdoor attacks through the geometry of the loss landscape in input space. Using kernel ridge regression as an exact model of wide neural networks, we showed that clustered dirty-label poisons induce a rank-one spike in the input Hessian whose magnitude scales quadratically with attack efficacy. Crucially, for nonlinear kernels we identified a near-clone regime in which poisoning remains order-one effective while the induced input curvature vanishes, rendering such attacks provably invisible to spectral detection methods. We evidence the near-clone regime experimentally, along with the implication of our theoretical results in deep neural networks trained under standard cross entropy loss.

We further analysed input-gradient regularisation as a defence mechanism and proved that it contracts poison-aligned Fisher and Hessian eigenmodes under gradient flow, while necessarily reducing data-fitting capacity. For exponential kernels, this regularisation admits a precise interpretation as anisotropic high-frequency suppression via an increased effective length scale, which shrinks the regime in which near-clone poisons can operate. We demonstrate consistent safety–efficacy trade-offs and further show that gradient regularisation and data augmentation act synergistically to produce the first known neural networks which are immune to data poisoning. Our Hessian analysis and experiments also give a novel method to detect strongly poisoned networks.

Overall, our results clarify when backdoors are fundamentally undetectable by spectral methods, why input-space curvature is a more informative diagnostic than weight-space analysis, and why no defence can eliminate poisoning without sacrificing expressive power. By reframing data poisoning as a geometric phenomenon in input space, this work provides a tractable foundation for analysing both attacks and defences in modern overparameterised models.

\section{Broader Impact}
This work contributes to the understanding of security risks in machine learning systems, particularly in safety-critical deployments where training data may be partially compromised. By characterising regimes in which backdoor attacks are inherently undetectable by spectral or curvature-based methods, our results expose fundamental limitations of existing defences and caution against over-reliance on post-hoc detection techniques. At the same time, our analysis of input-gradient regularisation provides principled guidance for mitigating poisoning, while making explicit the unavoidable trade-off between robustness and data-fitting capacity.

As with most research on adversarial machine learning, the insights presented here could be misused by adversaries to design more effective and less detectable poisoning attacks, for example by exploiting near-clone regimes in feature space. We mitigate this risk by focusing on general mechanisms rather than attack recipes, and by pairing all attack analysis with corresponding defensive implications and limitations. We believe that clearly articulating the fundamental constraints of defences ultimately strengthens real-world security by enabling practitioners to make informed, risk-aware design choices.

More broadly, this work encourages a shift from heuristic defences toward geometry-aware analyses of model sensitivity, which may inform the development of safer training procedures for high-stakes applications in areas such as healthcare, finance, and autonomous systems.

\bibliographystyle{icml2026}
\bibliography{bib}

@inproceedings{jagielski2018manipulating,
	title     = {Manipulating Machine Learning: Poisoning Attacks and Countermeasures for Regression Learning},
	author    = {Jagielski, Matthew and Oprea, Alina and Biggio, Battista and Liu, Chang and Nita-Rotaru, Cristina and Li, Bo},
	booktitle = {IEEE Symposium on Security and Privacy},
	pages     = {19--35},
	year      = {2018}
}

@inproceedings{liu2017robustlinreg,
	title     = {Robust Linear Regression Against Training Data Poisoning},
	author    = {Liu, Chang and Li, Bo and Vorobeychik, Yevgeniy and Oprea, Alina},
	booktitle = {Proceedings of the 10th ACM Workshop on Artificial Intelligence and Security (AISec)},
	pages     = {91--102},
	year      = {2017}
}

@inproceedings{muller2020regpoison,
	title     = {Data Poisoning Attacks on Regression Learning and Corresponding Defenses},
	author    = {M{\"u}ller, Nicolas Michael and Kowatsch, Daniel and B{\"o}ttinger, Konstantin},
	booktitle = {2020 IEEE 25th Pacific Rim International Symposium on Dependable Computing (PRDC)},
	pages     = {80--89},
	year      = {2020}
}

@inproceedings{gardner2018gpytorch,
	title     = {GPyTorch: Blackbox Matrix--Matrix Gaussian Process Inference with {GPU} Acceleration},
	author    = {Gardner, Jacob R. and Pleiss, Geoff and Bindel, David and Weinberger, Kilian Q. and Wilson, Andrew Gordon},
	booktitle = {Advances in Neural Information Processing Systems (NeurIPS)},
	year      = {2018}
}

@article{lanczos1950iteration,
	title   = {An Iteration Method for the Solution of the Eigenvalue Problem of Linear Differential and Integral Operators},
	author  = {Lanczos, Cornelius},
	journal = {Journal of Research of the National Bureau of Standards},
	volume  = {45},
	pages   = {255--282},
	year    = {1950}
}

@article{zhao2024robustnp,
	title   = {Robust Nonparametric Regression under Poisoning Attack},
	author  = {Zhao, Puning and Wan, Zhiguo},
	journal = {Proceedings of the AAAI Conference on Artificial Intelligence},
	year    = {2024},
	note    = {Also arXiv:2305.16771}
}

@inproceedings{zhu2022arks,
	title     = {Adversarially Robust Kernel Smoothing},
	author    = {Zhu, Jia-Jie and Kouridi, Christina and Nemmour, Yassine and Sch{\"o}lkopf, Bernhard},
	booktitle = {Proceedings of the 25th International Conference on Artificial Intelligence and Statistics (AISTATS)},
	series    = {PMLR},
	volume    = {151},
	pages     = {4972--4994},
	year      = {2022}
}

@article{allerbo2025fastrobustkrr,
	title   = {Fast Robust Kernel Regression through Sign Gradient Descent with Early Stopping},
	author  = {Allerbo, Oskar},
	journal = {Electronic Journal of Statistics},
	volume  = {19},
	number  = {1},
	pages   = {1231--1285},
	year    = {2025}
}

@inproceedings{deng2020aif,
	title     = {Interpreting Robust Optimization via Adversarial Influence Functions},
	author    = {Deng, Zhun and Dwork, Cynthia and Wang, Jialiang and Zhang, Linjun},
	booktitle = {Proceedings of the 37th International Conference on Machine Learning (ICML)},
	series    = {PMLR},
	volume    = {119},
	pages     = {2452--2462},
	year      = {2020}
}

@article{ribeiro2025kaf,
	title   = {Kernel Learning with Adversarial Features: Numerical Efficiency and Adaptive Regularization},
	author  = {Ribeiro, Ant{\^o}nio H. and V{\"a}vinggren, David and Zachariah, Dave and Sch{\"o}n, Thomas B. and Bach, Francis},
	journal = {Advances in Neural Information Processing Systems},
	year    = {2025},
	note    = {NeurIPS 2025, to appear}
}

@inproceedings{jacot2018ntk,
	title     = {Neural Tangent Kernel: Convergence and Generalization in Neural Networks},
	author    = {Jacot, Arthur and Gabriel, Franck and Hongler, Cl{\'e}ment},
	booktitle = {Advances in Neural Information Processing Systems},
	volume    = {31},
	year      = {2018}
}

@inproceedings{arora2019exact,
	title     = {On Exact Computation with an Infinitely Wide Neural Net},
	author    = {Arora, Sanjeev and Du, Simon S. and Hu, Wei and Li, Zhiyuan and Salakhutdinov, Ruslan and Wang, Ruosong},
	booktitle = {Advances in Neural Information Processing Systems},
	volume    = {32},
	pages     = {8139--8148},
	year      = {2019}
}

@inproceedings{wang2021robustlearning,
	title     = {Robust Learning for Data Poisoning Attacks},
	author    = {Wang, Yunjuan and Mianjy, Poorya and Arora, Raman},
	booktitle = {Proceedings of the 38th International Conference on Machine Learning (ICML)},
	series    = {PMLR},
	volume    = {139},
	pages     = {10859--10869},
	year      = {2021}
}

@article{karmakar2020depth2,
	title   = {Depth-2 Neural Networks Under a Data-Poisoning Attack},
	author  = {Karmakar, Sayar and Mukherjee, Anirbit and Papamarkou, Theodore},
	journal = {Neurocomputing},
	year    = {2023},
	note    = {Earlier version arXiv:2005.01699}
}

@article{geiping2021poison,
	title   = {What Doesn't Kill You Makes You Robust(er): How to Adversarially Train Against Data Poisoning},
	author  = {Geiping, Jonas and Fowl, Liam H. and Somepalli, Gowthami and Goldblum, Micah and Moeller, Michael and Goldstein, Tom},
	journal = {arXiv preprint arXiv:2102.13624},
	year    = {2021}
}

@inproceedings{liu2022friendly,
	title     = {Friendly Noise Against Adversarial Noise: A Powerful Defense Against Data Poisoning Attacks},
	author    = {Liu, Tian Yu and Yang, Yu and Mirzasoleiman, Baharan},
	booktitle = {Advances in Neural Information Processing Systems},
	volume    = {35},
	year      = {2022}
}

@inproceedings{wei2023shared,
	title     = {Shared Adversarial Unlearning: Backdoor Mitigation by Unlearning Shared Adversarial Examples},
	author    = {Wei, Shaokui and Zhang, Mingda and Zha, Hongyuan and Wu, Baoyuan},
	booktitle = {Advances in Neural Information Processing Systems},
	year      = {2023}
}

@inproceedings{bal2025adversarial,
	title     = {Adversarial Training for Defense Against Label Poisoning Attacks},
	author    = {Bal, Melis Ilayda and Cevher, Volkan and Muehlebach, Michael},
	booktitle = {International Conference on Learning Representations},
	year      = {2025}
}

@article{hallaji2023label,
	title   = {Label Noise Analysis Meets Adversarial Training: A Defense Against Label Poisoning in Federated Learning},
	author  = {Hallaji, Ehsan and Razavi-Far, Roozbeh and Saif, Mehrdad and Herrera-Viedma, Enrique},
	journal = {Knowledge-Based Systems},
	volume  = {266},
	pages   = {110384},
	year    = {2023},
	doi     = {10.1016/j.knosys.2023.110384}
}

@misc{granziol2025linearapproachdatapoisoning,
	title={A Linear Approach to Data Poisoning}, 
	author={Diego Granziol and Donald Flynn},
	year={2025},
	eprint={2505.15175},
	archivePrefix={arXiv},
	primaryClass={stat.ML},
	url={https://arxiv.org/abs/2505.15175}, 
}

@inproceedings{he2016deep,
	title      = {Deep Residual Learning for Image Recognition},
	author     = {He, Kaiming and Zhang, Xiangyu and Ren, Shaoqing and Sun, Jian},
	booktitle  = {Proceedings of the IEEE Conference on Computer Vision and Pattern Recognition},
	pages      = {770--778},
	year       = {2016}
}

@inproceedings{nguyen2021wanet,
	title        = {WaNet -- Imperceptible Warping-based Backdoor Attack},
	author       = {Nguyen, Anh and Tran, Toan and Tran, Anh},
	booktitle    = {International Conference on Learning Representations (ICLR)},
	year         = {2021},
	url          = {https://openreview.net/forum?id=rCr-savc8t},
	note         = {OpenReview ID: rCr-savc8t}
}

@inproceedings{Xiang2024BadChain,
	title     = {{BadChain}: Backdoor Chain-of-Thought Prompting for Large Language Models},
	author    = {Xiang, Zhen and Jiang, Fengqing and Xiong, Zidi and Ramasubramanian, Bhaskar and Poovendran, Radha and Li, Bo},
	booktitle = {International Conference on Learning Representations (ICLR)},
	year      = {2024}
}

@inproceedings{Li2024BadEdit,
	title     = {{BadEdit}: Backdooring Large Language Models by Model Editing},
	author    = {Li, Yanzhou and Li, Tianlin and Chen, Kangjie and Zhang, Jian and Liu, Shangqing and Wang, Wenhan and Zhang, Tianwei and Liu, Yang},
	booktitle = {International Conference on Learning Representations (ICLR)},
	year      = {2024}
}

@article{Shi2023BadGPT,
	title   = {{BadGPT}: Exploring Security Vulnerabilities of {ChatGPT} via Backdoor Attacks to {InstructGPT}},
	author  = {Shi, Jiawen and Liu, Yixin and Zhou, Pan and Sun, Lichao},
	journal = {arXiv preprint arXiv:2304.12298},
	year    = {2023}
}

@inproceedings{Cai2022BadPrompt,
	title     = {{BadPrompt}: Backdoor Attacks on Continuous Prompts},
	author    = {Cai, Xiangrui and Xu, Haidong and Xu, Sihan and Zhang, Ying and Yuan, Xiaojie},
	booktitle = {Advances in Neural Information Processing Systems 35 (NeurIPS 2022)},
	year      = {2022}
}

@article{Hubinger2024Sleeper,
	title   = {Sleeper Agents: Training Deceptive {LLMs} that Persist through Safety Training},
	author  = {Hubinger, Evan and Denison, Carson and Mu, Jesse and Lambert, Mike and Tong, Meg and MacDiarmid, Monte and Lanham, Tamera and Ziegler, Daniel M. and Maxwell, Tim and Cheng, Newton and others},
	journal = {arXiv preprint arXiv:2401.05566},
	year    = {2024}
}

@article{xu2024shadowcast,
	title={Shadowcast: Stealthy data poisoning attacks against vision-language models},
	author={Xu, Yuancheng and Yao, Jiarui and Shu, Manli and Sun, Yanchao and Wu, Zichu and Yu, Ning and Goldstein, Tom and Huang, Furong},
	journal={arXiv preprint arXiv:2402.06659},
	year={2024}
}

@article{wang2024stronger,
	title={The stronger the diffusion model, the easier the backdoor: Data poisoning to induce copyright breaches without adjusting finetuning pipeline},
	author={Wang, Haonan and Shen, Qianli and Tong, Yao and Zhang, Yang and Kawaguchi, Kenji},
	journal={arXiv preprint arXiv:2401.04136},
	year={2024}
}

@article{pan2024trojan,
	title={From Trojan Horses to Castle Walls: Unveiling Bilateral Data Poisoning Effects in Diffusion Models},
	author={Pan, Zhuoshi and Yao, Yuguang and Liu, Gaowen and Shen, Bingquan and Zhao, H Vicky and Kompella, Ramana and Liu, Sijia},
	journal={Advances in Neural Information Processing Systems},
	volume={37},
	pages={82265--82295},
	year={2024}
}

@article{he2022indiscriminate,
	title={Indiscriminate poisoning attacks on unsupervised contrastive learning},
	author={He, Hao and Zha, Kaiwen and Katabi, Dina},
	journal={arXiv preprint arXiv:2202.11202},
	year={2022}
}

@inproceedings{papernot2018sok,
  title     = {{SoK}: Security and Privacy in Machine Learning},
  author    = {Papernot, Nicolas and McDaniel, Patrick and Sinha, Arunesh and Wellman, Michael P},
  booktitle = {2018 IEEE European Symposium on Security and Privacy (EuroS\&P)},
  pages     = {399--414},
  year      = {2018}
}

@inproceedings{carlini2019secret,
  title     = {The Secret Sharer: Measuring Unintended Neural Network Memorization \& Extracting Secrets},
  author    = {Carlini, Nicholas and Liu, Chang and Erlingsson, {\'U}lfar and Fernandez, J{\'e}r{\^o}me and Song, Dawn},
  booktitle = {28th USENIX Security Symposium (USENIX Security)},
  pages     = {267--284},
  year      = {2019}
}

@misc{gu2017badnets,
  title        = {BadNets: Identifying Vulnerabilities in the Machine Learning Model Supply Chain},
  author       = {Gu, Tianyu and Dolan-Gavitt, Brendan and Garg, Siddharth},
  year         = {2017},
  archivePrefix= {arXiv},
  eprint       = {1708.06733},
  primaryClass = {cs.CR}
}

@inproceedings{carlini2022poisoning,
  title={Poisoning and Backdooring Contrastive Learning},
  author={Carlini, Nicholas and Terzis, Andreas},
  booktitle={Proceedings of the International Conference on Learning Representations (ICLR)},
  year={2022},
  url={https://arxiv.org/abs/2106.09667}
}

@article{papyan2020neuralcollapse,
	title   = {Prevalence of Neural Collapse during the Terminal Phase of Deep Learning Training},
	author  = {Papyan, Vardan and Han, X. Y. and Donoho, David L.},
	journal = {Proceedings of the National Academy of Sciences},
	volume  = {117},
	number  = {40},
	pages   = {24652--24663},
	year    = {2020},
	doi     = {10.1073/pnas.2015509117},
	url     = {https://arxiv.org/abs/2008.08186}
}

@article{lu2020nccrossentropy,
	title   = {Neural Collapse with Cross-Entropy Loss},
	author  = {Lu, Jianfeng and Steinerberger, Stefan},
	journal = {Applied and Computational Harmonic Analysis},
	year    = {2022},
	note    = {First posted as arXiv:2012.08465},
	url     = {https://arxiv.org/abs/2012.08465}
}

@misc{ji2021ulpm,
	title         = {How Gradient Descent Separates Data with Neural Collapse: A Layer-Peeled Perspective},
	author        = {Ji, Ziwei and Dud{\'i}k, Miroslav and Schapire, Robert E. and Telgarsky, Matus},
	year          = {2021},
	eprint        = {2110.02796},
	archivePrefix = {arXiv},
	primaryClass  = {cs.LG},
	url           = {https://arxiv.org/abs/2110.02796}
}

@article{hong2024jmlr,
	title   = {Neural Collapse for Unconstrained Feature Model under Cross-Entropy Loss with Imbalanced Data},
	author  = {Hong, Wanli and Ling, Shuyang},
	journal = {Journal of Machine Learning Research},
	volume  = {25},
	number  = {220},
	pages   = {1--48},
	year    = {2024},
	url     = {https://www.jmlr.org/papers/volume25/23-1215/23-1215.pdf}
}

@inproceedings{han2021mse,
	title     = {Neural Collapse Under MSE Loss: Proximity to and Dynamics on the Central Path},
	author    = {Han, X. Y. and Papyan, Vardan and Donoho, David L.},
	booktitle = {International Conference on Learning Representations (ICLR)},
	year      = {2022},
	note      = {arXiv:2106.02073},
	url       = {https://arxiv.org/abs/2106.02073}
}


\appendix
\section{Theoretical Proofs}
\label{app:proofs}

\subsection{Aggregate Poison Gain}

\begin{lemma}[Aggregate poison gain]
	\label{lem:sum-alpha}
	Under Assumption~\ref{asmp:cluster},
	\[
	\mathbf{1}^\top \boldsymbol{\alpha}_P
	=
	y_t\,\frac{m}{n\lambda+\kappa_0 m}
	=
	y_t\,S(m;\lambda).
	\]
\end{lemma}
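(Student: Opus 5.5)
The approach is to restrict the KRR normal equations $(K+n\lambda I)\boldsymbol{\alpha}=\boldsymbol{y}$ to the poison rows indexed by $P$. Here $\kappa_0$ denotes $k_\zeta=k(\zeta,\zeta)$, the diagonal kernel value at the cluster centre. Partition $K$ into blocks $K_{PP}$, $K_{PC}$, $K_{CP}$, $K_{CC}$, with $C$ the clean indices. The poison rows then read
\[
(K_{PP}+c I_m)\boldsymbol{\alpha}_P + K_{PC}\boldsymbol{\alpha}_C = y_t\mathbf{1},
\qquad c=n\lambda .
\]
By Assumption~\ref{asmp:cluster}, cross-block effects are negligible, so the term $K_{PC}\boldsymbol{\alpha}_C$ is dropped. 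We also substitute $K_{PP}=k_\zeta\mathbf{1}\mathbf{1}^\top$. This leaves the decoupled $m\times m$ system
\[
(k_\zeta\mathbf{1}\mathbf{1}^\top + cI_m)\boldsymbol{\alpha}_P = y_t\mathbf{1}.
\]

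Next, solve this system explicitly. Since $\mathbf{1}$ is an eigenvector of $k_\zeta\mathbf{1}\mathbf{1}^\top + cI_m$ with eigenvalue $c+k_\zeta m$, and $c>0$ makes the matrix invertible, we get
\[
\boldsymbol{\alpha}_P = \frac{y_t}{c+k_\zeta m}\mathbf{1}.
\]
Equivalently, one can apply Sherman--Morrison:
\[
(cI+k_\zeta\mathbf{1}\mathbf{1}^\top)^{-1} = c^{-1}I - \frac{k_\zeta}{c(c+k_\zeta m)}\mathbf{1}\mathbf{1}^\top .
\]
Alternatively, take the inner product of the system with $\mathbf{1}$ directly. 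Writing $s=\mathbf{1}^\top\boldsymbol{\alpha}_P$, this gives $k_\zeta m s + c s = y_t m$, hence $s = y_t m/(c+k_\zeta m) = y_t S(m;\lambda)$. This last route is the one I would present, because it needs no inversion beyond the scalar $c+k_\zeta m>0$.

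The only real obstacle is making the neglect of $K_{PC}\boldsymbol{\alpha}_C$ precise. I would either state the lemma as exact under the idealisation of Assumption~\ref{asmp:cluster}, meaning $K_{PC}=0$ and an exact clone block. Or I would record the error term: $s = y_t S(m;\lambda) - \mathbf{1}^\top K_{PC}\boldsymbol{\alpha}_C/(c+k_\zeta m)$. The correction is then small whenever $\|K_{PC}\boldsymbol{\alpha}_C\|_1 \ll m|y_t|$. Similarly, a perturbation $K_{PP}=k_\zeta\mathbf{1}\mathbf{1}^\top+E$ contributes an $O(\|E\|/c)$ relative correction, again by pairing with $\mathbf{1}$. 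This is consistent with the ``$\approx$'' in the assumption.
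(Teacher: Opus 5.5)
Your proposal is correct and follows essentially the paper's route: reduce to the decoupled clone block $(k_\zeta\mathbf{1}\mathbf{1}^\top + n\lambda I_m)\boldsymbol{\alpha}_P = y_t\mathbf{1}$ and use that $\mathbf{1}$ is an eigenvector of this matrix. The paper gets there via Sherman--Morrison; your pairing with $\mathbf{1}$ is an equivalent one-line shortcut, and your explicit error term $-\mathbf{1}^\top K_{PC}\boldsymbol{\alpha}_C/(c+k_\zeta m)$ spells out the cross-block neglect that the paper leaves implicit in the ``$\approx$'' of Assumption~\ref{asmp:cluster}.
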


\begin{proof}
	On the poison block, the regularised Gram matrix satisfies
	\begin{align}
		K_{PP}+n\lambda I_m
		&\approx \kappa_0\,\mathbf{1}\mathbf{1}^\top+n\lambda I_m .
	\end{align}
	Applying the Sherman--Morrison identity,
	\begin{align}
		(\kappa_0\mathbf{1}\mathbf{1}^\top+n\lambda I_m)^{-1}\mathbf{1}
		&=
		\frac{1}{n\lambda+\kappa_0 m}\mathbf{1}.
	\end{align}
	Multiplying by $y_t\mathbf{1}$ and summing entries yields the result.
\end{proof}


\subsection{Efficacy of a Cloned Poison Cluster}

\begin{theorem}[Efficacy of a cloned cluster]
	\label{thm:efficacy}
	Under Assumption~\ref{asmp:cluster}, the poison-induced score shift at a triggered test point $x_0$ is
	\[
	\Delta f(x_0)
	=
	k(x_0,\zeta)\,y_t\,\frac{m}{n\lambda+\kappa_0 m}.
	\]
	For $m\ll n\lambda/\kappa_0$, $\Delta f(x_0)=\Theta(m)$; as $m\to\infty$ it saturates.
\end{theorem}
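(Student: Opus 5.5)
The plan is to reduce the score shift at the trigger to the aggregate poison gain of Lemma~\ref{lem:sum-alpha}, and then read off the two asymptotic regimes from the explicit form of $S(m;\lambda)$.

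\emph{Step 1 (reduction to the poison block).} Write the KRR prediction at $x_0$ as
\[
f(x_0)=\sum_{i\notin P}\alpha_i k(x_0,x_i)+\sum_{j\in P}\alpha_j k(x_0,x_j).
\]
By Assumption~\ref{asmp:cluster}, cross-block effects at $x_0$ are negligible, so the clean coefficients and the clean contribution are unchanged to leading order. The poison-induced change $\Delta f(x_0)$ is therefore just the second sum.

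\emph{Step 2 (use the clone structure).} Since the poisons sit at $\zeta$, we have $k(x_0,x_j)=k(x_0,\zeta)=k_0$ for every $j\in P$. The second sum then factorises as $k_0\,\mathbf{1}^\top\boldsymbol{\alpha}_P$. Lemma~\ref{lem:sum-alpha}, which comes from Sherman--Morrison applied to $k_\zeta\mathbf{1}\mathbf{1}^\top+n\lambda I_m$, gives $\mathbf{1}^\top\boldsymbol{\alpha}_P=y_t\,m/(n\lambda+k_\zeta m)$. Hence
\[
\Delta f(x_0)=k_0\,y_t\,\frac{m}{n\lambda+k_\zeta m}.
\]
Here $\kappa_0$ in the appendix statement is identified with $k_\zeta=k(\zeta,\zeta)$, and $c=n\lambda$.

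\emph{Step 3 (asymptotics).} Write $S(m;\lambda)=\frac{m}{c}\bigl(1+k_\zeta m/c\bigr)^{-1}$.
\begin{itemize}
\item If $m\ll c/k_\zeta$, the bracket is $1+o(1)$, so $\Delta f(x_0)=(k_0y_t/c)\,m\,(1+o(1))=\Theta(m)$.
\item As $m\to\infty$, $S=\bigl(k_\zeta+c/m\bigr)^{-1}\to 1/k_\zeta$, so $\Delta f(x_0)\to k_0y_t/k_\zeta$. This is the saturation. For instance it equals $y_t k_0$ for normalised kernels such as the exponential kernel, where $k_\zeta=1$.
\end{itemize}
The only nondegeneracy needed is $k_\zeta>0$, which holds because $k$ is positive definite.

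\emph{Main obstacle.} The delicate point is Step~1: justifying that the poison coefficients can be computed from the poison block alone, treating $K_{PP}+n\lambda I$ in isolation. Strictly, the block inverse involves a Schur complement coupling poisons to clean data. I would state this as the content of the negligible-cross-block clause of Assumption~\ref{asmp:cluster}, namely $K_{PC}\approx 0$, so the Schur correction vanishes to leading order. I would not attempt to bound the correction quantitatively, consistent with the approximate nature of the assumption.
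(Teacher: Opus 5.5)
Your proposal is correct and follows the paper's proof essentially step for step. You expand $f(x_0)$ via the representer theorem, use the negligible-cross-block (dominance) clause of Assumption~\ref{asmp:cluster} to keep only the poison sum, factor out $k(x_0,\zeta)$, and substitute the Sherman--Morrison gain of Lemma~\ref{lem:sum-alpha}, with $\kappa_0$ correctly identified as $k_\zeta$. Your explicit derivation of the two asymptotic regimes and your remark that the Schur-complement coupling is absorbed into the assumption spell out what the paper leaves implicit, but they do not change the argument.
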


\begin{proof}
	By the representer theorem,
	\begin{align}
		f(x_0)
		&= \sum_{i=1}^n \alpha_i k(x_0,x_i).
	\end{align}
	Under Assumption~\ref{asmp:cluster}(b), the poison contribution dominates:
	\begin{align}
		\Delta f(x_0)
		&\approx \sum_{i\in P}\alpha_i k(x_0,\zeta)
		= k(x_0,\zeta)\,\mathbf{1}^\top\boldsymbol{\alpha}_P .
	\end{align}
	Substituting Lemma~\ref{lem:sum-alpha} gives the claim.
\end{proof}


\subsection{Rank-1 Input-Hessian Spike}

\begin{theorem}[Rank-1 input-Hessian spike and spike–efficacy law]
	\label{thm:spike}
	Under Assumption~\ref{asmp:cluster},
	\begin{align}
		\|\nabla_x f(x_0)\|^2
		&=
		\|\nabla_x k(x_0,\zeta)\|^2\,S(m;\lambda)^2,
		\label{eq:spike-grad}
		\\
		\lambda_{\max}\!\left(\nabla_x^2 L(x_0)\right)
		&=
		\frac{\|\nabla_x k(x_0,\zeta)\|^2}{k(x_0,\zeta)^2}
		\bigl(\Delta f(x_0)\bigr)^2 .
		\label{eq:spike-law}
	\end{align}
	Thus the curvature spike scales quadratically in efficacy.
\end{theorem}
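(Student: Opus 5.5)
The plan follows the same route as Theorem~\ref{thm:efficacy}: start from the representer form of the gradient and let Assumption~\ref{asmp:cluster} collapse it onto the poison block. Differentiating $f(x)=\sum_i\alpha_i k(x,x_i)$ gives $\nabla_x f(x_0)=\sum_i \alpha_i \nabla_x k(x_0,x_i)$. By the dominance part of the assumption, the poison-induced change in $\nabla_x f(x_0)$ is carried by the indices in $P$. Since all poisons sit at $\zeta$, each term shares the common vector $\nabla_x k(x_0,\zeta)$, so
\[
\nabla_x f(x_0)\approx \nabla_x k(x_0,\zeta)\,\mathbf{1}^\top\boldsymbol{\alpha}_P=\nabla_x k(x_0,\zeta)\,y_t\,S(m;\lambda),
\]
using Lemma~\ref{lem:sum-alpha}. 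Taking squared norms and using $y_t^2=1$ (labels $\pm1$, or one-hot coordinate; otherwise a factor $y_t^2$ is carried) yields \eqref{eq:spike-grad}.

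For \eqref{eq:spike-law}, I would first identify $\lambda_{\max}$ of the input Hessian with the Gauss--Newton spike. Recall $\nabla_x^2L=\nabla_x f\nabla_x f^\top+(f(x_0)-y)\nabla_x^2 f(x_0)$. The first term is rank one with eigenvalue $\|\nabla_x f(x_0)\|^2$ and eigenvector $\nabla_x k(x_0,\zeta)/\|\nabla_x k(x_0,\zeta)\|$, which points along $x_0-\zeta$ for the exponential kernel. The residual term must be argued to be subdominant. I would do this either by evaluating at the target label, where an effective poison makes $f(x_0)\approx y$, or by treating it as a perturbation and invoking Weyl's inequality so that $\lambda_{\max}$ agrees with $\|\nabla_x f\|^2$ up to $|f(x_0)-y|\,\|\nabla_x^2 f(x_0)\|$. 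This is the definition $\Lambda_{\mathrm{GN}}$ used in the main-text version, and I would state explicitly that the equality is for the Gauss--Newton part.

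Finally I substitute efficacy. Theorem~\ref{thm:efficacy} gives $S(m;\lambda)=\Delta f(x_0)/(k_0 y_t)$, provided $k_0\neq 0$. Inserting this into \eqref{eq:spike-grad} gives $\|\nabla_x k(x_0,\zeta)\|^2(\Delta f(x_0))^2/k_0^2=R_k(x_0,\zeta)(\Delta f(x_0))^2$, which is the quadratic spike--efficacy law. The only delicate step is the control of the residual-curvature term in the Hessian. The rest is a direct algebraic chaining of Lemma~\ref{lem:sum-alpha} and Theorem~\ref{thm:efficacy} under the clone approximation.
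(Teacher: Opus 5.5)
Your proposal is correct and follows the paper's proof essentially step for step. You collapse $\nabla_x f(x_0)$ onto the poison block to get $y_t S(m;\lambda)\nabla_x k(x_0,\zeta)$, identify $\lambda_{\max}$ with the rank-one Gauss--Newton eigenvalue $\|\nabla_x f(x_0)\|^2$, and eliminate $S(m;\lambda)$ via $\Delta f(x_0)=k_0 y_t S(m;\lambda)$; your extra care only makes explicit what the paper leaves tacit: the Weyl-type control of the residual term $(f(x_0)-y)\nabla_x^2 f(x_0)$, the implicit $y_t^2=1$ in the gradient identity (which cancels in the spike law anyway), and $k_0\neq 0$.
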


\begin{proof}
	From $\nabla_x f(x)=\sum_i\alpha_i\nabla_x k(x,x_i)$,
	\begin{align}
		\nabla_x f(x_0)
		&\approx (\mathbf{1}^\top\boldsymbol{\alpha}_P)\,\nabla_x k(x_0,\zeta)
		= y_t S(m;\lambda)\,\nabla_x k(x_0,\zeta),
	\end{align}
	giving \eqref{eq:spike-grad}.  
	
	For squared loss,
	\(
	\nabla_x^2 L=\nabla_x f\,\nabla_x f^\top+(f-y)\nabla_x^2 f
	\),
	whose leading rank-1 term has eigenvalue $\|\nabla_x f\|^2$.
	Eliminating $S(m;\lambda)$ using $\Delta f(x_0)=k(x_0,\zeta)y_t S(m;\lambda)$ yields
	\eqref{eq:spike-law}.
\end{proof}


\subsection{Near-Clone Regime and the Twilight Zone}

\begin{proposition}[Near-clone regime]
	\label{prop:nearclone}
	Let $k(x,x')=\exp(-\|x-x'\|^2/(2\ell^2))$ and $r=\|x_0-\zeta\|\ll\ell$.
	Then
	\begin{align}
		\Delta f(x_0)
		&= y_t S(m;\lambda)\bigl(1+O(r^2/\ell^2)\bigr),
		\\
		\lambda_{\max}\!\left(\nabla_x^2 L(x_0)\right)
		&= S(m;\lambda)^2\,\frac{r^2}{\ell^4}+O(r^4/\ell^6).
	\end{align}
\end{proposition}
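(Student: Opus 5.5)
This is a specialisation of Theorem~\ref{thm:efficacy} and Theorem~\ref{thm:spike} to the exponential kernel, followed by a Taylor expansion in the small parameter $\varepsilon:=r^2/\ell^2$. For the efficacy claim we take $\Delta f(x_0)=k(x_0,\zeta)\,y_t\,S(m;\lambda)$ from Theorem~\ref{thm:efficacy}. With $k(x_0,\zeta)=e^{-r^2/(2\ell^2)}$, Taylor's theorem with remainder gives $e^{-\varepsilon/2}=1-\varepsilon/2+O(\varepsilon^2)$. Hence $\Delta f(x_0)=y_tS(m;\lambda)(1+O(r^2/\ell^2))$. Note that $S$ depends only on $k_\zeta=k(\zeta,\zeta)=1$, so it carries no $r$-dependence.

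For the curvature claim, we first compute the kernel gradient explicitly:
\[
\nabla_x k(x,\zeta)=-\frac{x-\zeta}{\ell^2}\,k(x,\zeta),
\qquad\text{so}\qquad
\|\nabla_x k(x_0,\zeta)\|^2=\frac{r^2}{\ell^4}\,e^{-r^2/\ell^2}.
\]
This is Corollary~\ref{cor:exp-factor}. Substituting into \eqref{eq:spike-grad} gives
\[
\Lambda_{\mathrm{GN}}=S^2\,\frac{r^2}{\ell^4}\,e^{-r^2/\ell^2}=S^2\,\frac{r^2}{\ell^4}\bigl(1-r^2/\ell^2+O(r^4/\ell^4)\bigr)=S^2\,\frac{r^2}{\ell^4}+O\!\left(S^2\,\frac{r^4}{\ell^6}\right).
\]
Since $S\le 1/k_\zeta=1$, the constant is absorbed and the remainder is $O(r^4/\ell^6)$ as stated.

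The main obstacle is identifying $\lambda_{\max}(\nabla_x^2L(x_0))$ with the Gauss--Newton term. The full Hessian also contains $(f(x_0)-y)\nabla_x^2 f(x_0)$, and under Assumption~\ref{asmp:cluster} the dominant part of this is $(f-y)\,y_tS\,\nabla_x^2k(x_0,\zeta)$ with
\[
\nabla_x^2 k=k\Bigl(\frac{(x-\zeta)(x-\zeta)^\top}{\ell^4}-\frac{I}{\ell^2}\Bigr).
\]
This matrix is $O(1/\ell^2)$, which is not small relative to $r^2/\ell^4$.

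I would handle this in one of two ways. The first is to follow the convention of Theorem~\ref{thm:spike} and define the spike as the top eigenvalue of the rank-one Gauss--Newton term, i.e.\ interpret $\lambda_{\max}$ as $\Lambda_{\mathrm{GN}}$ exactly as in that theorem's proof. The second is to argue that at a triggered point fitted to the target label the residual vanishes, $f(x_0)-y_t\approx0$; at a near-clone the prediction equals the poison's fitted value up to $O(\varepsilon)$, so the residual factor is controlled.

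I would present the first route as the main argument, since the paper's Theorem~\ref{thm:spike} already adopts it. I would then add a remark that the second-order term is isotropic to leading order (it is $-kI/\ell^2$ plus a rank-one piece along $x_0-\zeta$). It therefore does not produce a poison-aligned spike, and it is suppressed whenever the residual at $x_0$ is small.
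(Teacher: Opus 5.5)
Your proof is the paper's: Taylor-expand $k(x_0,\zeta)$ and $\|\nabla_x k(x_0,\zeta)\|^2=\tfrac{r^2}{\ell^4}e^{-r^2/\ell^2}$, substitute into Theorems~\ref{thm:efficacy} and~\ref{thm:spike}, and read $\lambda_{\max}$ as the Gauss--Newton eigenvalue, exactly as the paper's proof of Theorem~\ref{thm:spike} does; your use of $S<1$ to absorb the $S^2$ prefactor into the remainder is a detail the paper leaves implicit. Keep that convention as the sole justification and drop your second route, because it fails: even in the best case $S\to1$ (taking $y=y_t$ and a negligible clean part), the residual at $x_0$ is about $y_t(k_0-1)\approx -y_t r^2/(2\ell^2)$, and multiplying it by $\nabla_x^2 f=O(S/\ell^2)$ gives a term of order $r^2/\ell^4$ (raising the top eigenvalue to roughly $\tfrac32 r^2/\ell^4$) --- the same order as the spike --- so for the full Hessian the $O(r^4/\ell^6)$ remainder holds only under that convention.
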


\begin{proof}
	A Taylor expansion gives
	\begin{align}
		k(x_0,\zeta)
		&= 1-\frac{r^2}{2\ell^2}+O(r^4/\ell^4),
		\\
		\|\nabla_x k(x_0,\zeta)\|^2
		&= \frac{r^2}{\ell^4}+O(r^4/\ell^6).
	\end{align}
	Substituting into Theorems~\ref{thm:efficacy} and~\ref{thm:spike} yields the result.
\end{proof}

\noindent
\textbf{Interpretation.}
In this regime, poison efficacy is order-one while curvature vanishes quadratically, producing effective but spectrally hidden backdoors.


\subsection{Capacity Loss under Gradient / Length-Scale Regularisation}

\begin{theorem}[Monotone loss of data-fitting capacity]
	\label{thm:capacity}
	Let $K_\ell$ be a stationary kernel Gram matrix with length-scale $\ell$.
	Then the effective degrees of freedom
	\[
	\mathrm{df}(\ell)
	=
	\mathrm{tr}\!\left[K_\ell(K_\ell+\lambda I)^{-1}\right]
	\]
	is strictly decreasing in $\ell$, and the training residual norm is strictly increasing.
\end{theorem}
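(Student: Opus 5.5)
The plan is to reduce both claims to the spectrum of $K_\ell$. Write $K_\ell=U_\ell\,\mathrm{diag}(\mu_1(\ell),\dots,\mu_n(\ell))\,U_\ell^\top$. Then
\[
\mathrm{df}(\ell)=\sum_{i=1}^n \frac{\mu_i(\ell)}{\mu_i(\ell)+\lambda},
\qquad
\|y-K_\ell\alpha\|^2=\lambda^2\,y^\top(K_\ell+\lambda I)^{-2}y .
\]
The function $h(\mu)=\mu/(\mu+\lambda)$ is strictly concave and increasing. For a stationary kernel normalised so that $k(x,x)=1$, we have $\tr K_\ell=n$ for every $\ell$. So the total eigenvalue mass is fixed, and only its distribution changes with $\ell$.

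\textbf{Step 1 (majorisation reduction).} Since $\mathrm{df}=\sum_i h(\mu_i)$ is strictly Schur-concave on vectors of fixed sum, it suffices to show the following: for $\ell'>\ell$, the spectrum $\mu(\ell')$ strictly majorises $\mu(\ell)$. This says that increasing the length-scale pushes eigenvalue mass into the top few modes.

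\textbf{Step 2 (monotonicity of the spectrum).} This is the step I expect to be the main obstacle. For the exponential kernel, set $t=1/(2\ell^2)$, so that entrywise $K_t=\exp(-tD)$ with $D_{ij}=\|x_i-x_j\|^2$. I would try to show that the Ky Fan partial sums $\sum_{i\le k}\mu_i$ are decreasing in $t$.

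As a first attempt, I would differentiate:
\[
\partial_t \mathrm{df}=\lambda\,\tr\!\big[(K+\lambda I)^{-1}(\partial_t K)(K+\lambda I)^{-1}\big],
\qquad
\partial_t K=-D\circ K .
\]
Here $\partial_t K$ has zero diagonal and non-positive off-diagonal entries. The task is to show that this trace is positive, i.e.\ that $\mathrm{df}$ increases with $t$ and so decreases with $\ell$. The intended mechanism is that $-D\circ K$ is a trace-free matrix whose action moves weight from the dominant (smooth) eigendirections to the rough ones, and the resolvent $(K+\lambda I)^{-2}$ weights the rough directions more heavily.

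If the finite-$n$ argument resists, the fallback is the continuum/Fourier version used in the High-pass filter remark. There the spectral density scales as $\widehat{k}_\ell(\omega)=\ell^p\,\widehat{k}_1(\ell\omega)$. Substituting $u=\ell\omega$ gives
\[
\mathrm{df}(\ell)\propto \ell^{-p}\int \frac{\ell^p\widehat{k}_1(u)}{\ell^p\widehat{k}_1(u)+\lambda}\,du .
\]
This can be differentiated explicitly in $\ell$. For the Gaussian, $\widehat{k}_1(u)=e^{-\|u\|^2/2}$, and I expect the derivative to be strictly negative. I would state any extra assumption this needs, for example distinct data points so that $K_\ell\succ0$.

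\textbf{Step 3 (residual).} For the residual I would use the identity $\partial_\ell(K+\lambda I)^{-2}=-(K+\lambda I)^{-1}\big[(\partial_\ell K)(K+\lambda I)^{-1}+(K+\lambda I)^{-1}(\partial_\ell K)\big](K+\lambda I)^{-1}$. Under the same spectral contraction, each shrinkage factor $\lambda/(\mu_i+\lambda)$ on the modes carrying $y$ increases. So the residual is strictly increasing whenever $y$ has nonzero projection onto those modes, which is a genericity condition on $y$ that I would state explicitly. Finally, linking back to the main-text version of this theorem, where the regulariser is $\kappa G$, I would invoke the High-pass filter remark, $\ell_{\mathrm{eff}}^2=\ell^2+c\kappa$, to transfer monotonicity in $\ell$ to monotonicity in $\kappa$.
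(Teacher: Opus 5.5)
Your Step 1 is the right reduction. It is sharper than the paper's proof, which argues only from $\partial s_j/\partial\sigma_j>0$ and $\sum_j d\sigma_j/d\ell=0$ (insufficient without concavity plus majorisation) and simply asserts that mass moves to low frequencies.

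The first gap is Step 2, which you never close. The sign of $\lambda\,\mathrm{tr}[(K+\lambda I)^{-2}(-D\circ K)]$ is left open. The Fourier fallback computes a different quantity: the continuum df $\int \widehat{k}_1(u)/(\ell^p\widehat{k}_1(u)+\lambda)\,du$, which decreases for \emph{every} scale family with $\widehat{k}_1\ge 0$, not the trace of a finite Gram matrix.

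At the stated generality Step 2 is false, so no argument can close it. Take the sinc kernel $k_\ell(x,x')=\sin(|x-x'|/\ell)/(|x-x'|/\ell)$ on $\mathbb{R}$ and two points at distance $d$. Then $\mathrm{df}=\frac{1+s}{1+s+\lambda}+\frac{1-s}{1-s+\lambda}$ with $s=k_\ell(d)$, which is strictly decreasing in $|s|$. But $|s|$ oscillates in $\ell$: it is about $0.22$ at $\ell=d/4.49$ and $0$ at $\ell=d/\pi$, so df rises on that interval.

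For the Gaussian kernel the missing idea is a Schur-product factorisation. For $\ell<\ell'$, $K_\ell=K_{\ell'}\circ C$ with $C_{ij}=\exp\bigl(-\|x_i-x_j\|^2(\tfrac{1}{2\ell^2}-\tfrac{1}{2\ell'^2})\bigr)$, which is a correlation matrix. For any rank-$k$ projector $P$, $\mathrm{tr}(P(K_{\ell'}\circ C))=\mathrm{tr}((P\circ C)K_{\ell'})$, with $0\preceq P\circ C\preceq I$ and $\mathrm{tr}(P\circ C)=k$. Ky Fan's principle then gives $\mu(\ell)\prec\mu(\ell')$. Strictness holds because $\|K_\ell\|_F^2=\sum_{ij}e^{-\|x_i-x_j\|^2/\ell^2}$ strictly increases in $\ell$ once two data points differ. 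You need that assumption anyway, since otherwise df is constant. The same argument works whenever $k_\ell/k_{\ell'}$ is a normalised positive-definite kernel for $\ell<\ell'$, e.g.\ the Laplace kernel.

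The second gap is that Step 3 is wrong as written. Majorisation means the \emph{top} eigenvalues grow with $\ell$, so their shrinkage factors $\lambda/(\mu_i+\lambda)$ decrease; $U_\ell$ also rotates. For a fixed $y$ the residual is not monotone at all. With the Gaussian kernel, $n=2$ and $y=(1,1)^\top$ (an eigenvector of $K_\ell$ for every $\ell$), $\|y-K_\ell\alpha\|^2=2\lambda^2/(1+k_\ell(d)+\lambda)^2$. This is strictly \emph{decreasing} in $\ell$, and no genericity condition on $y$ removes it.

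The paper's proof makes the same error: ``each smoothing factor decreases'' is impossible at fixed trace. The residual half of the statement therefore only survives in averaged form. The residual averaged over isotropic $y$, $\lambda^2\,\mathrm{tr}(K_\ell+\lambda I)^{-2}=\lambda^2\sum_i(\mu_i+\lambda)^{-2}$, is strictly Schur-convex in $\mu$. It is therefore strictly increasing under the majorisation above.

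Finally, your transfer to $\kappa$ through $\ell_{\mathrm{eff}}^2=\ell^2+c\kappa$ rests on an unproved remark and is not needed for this statement.
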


\begin{proof}
	Let $K_\ell=U\Lambda_\ell U^\top$.
	Predictions satisfy
	\begin{align}
		\hat y
		&= U\,\mathrm{diag}\!\left(
		\frac{\sigma_j(\ell)}{\sigma_j(\ell)+\lambda}
		\right)U^\top y .
	\end{align}
	Thus
	\begin{align}
		\mathrm{df}(\ell)
		&= \sum_j \frac{\sigma_j(\ell)}{\sigma_j(\ell)+\lambda}.
	\end{align}
	For stationary kernels, increasing $\ell$ shifts spectral mass toward low frequencies while preserving trace.
	Since $\partial s_j/\partial\sigma_j>0$ and $\sum_j d\sigma_j/d\ell=0$, the weighted sum decreases.
	Each smoothing factor decreases, so the residual norm increases monotonically.
\end{proof}


\subsection{Fisher Contraction under Gradient Flow}

\begin{theorem}[Fisher contraction and directional decay]
	\label{thm:fisherflow}
	Let $g_w(x)=\nabla_x L(w;x)$ and $F(w)=\mathbb{E}[g_w g_w^\top]$.
	Under gradient flow on
	\[
	\mathcal{J}(w)
	=
	\mathbb{E}L(w;x)
	+
	\frac{\kappa}{2}\mathbb{E}\|g_w(x)\|^2,
	\]
	the Fisher matrix contracts in Loewner order.
	Moreover, if $v^\top A(w,x)v\ge\alpha>0$ with
	$A(w,x)=\partial_w g_w(x)\partial_w g_w(x)^\top$, then
	\[
	v^\top F(t)v \le e^{-2\kappa\alpha t}v^\top F(0)v .
	\]
\end{theorem}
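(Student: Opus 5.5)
The plan is to differentiate $E_v(t)=v^\top F(w_t)v$ along the flow $\dot w_t=-\nabla_w\mathcal{J}(w_t)$ and then close the estimate with Grönwall's inequality. Write $u_v(w;x):=v^\top g_w(x)$, so that $E_v=\mathbb{E}[u_v^2]$ and $\nabla_w u_v = \partial_w g_w(x)^\top v$. The chain rule gives
\begin{equation*}
\dot E_v(t) = 2\,\mathbb{E}\big[u_v\,\nabla_w u_v^\top \dot w_t\big].
\end{equation*}
Next split the velocity into its data-fitting and regulariser parts, $\dot w_t = -\nabla_w\mathbb{E}L - \kappa\,\mathbb{E}\big[\partial_w g_w(x)^\top g_w(x)\big]$. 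The second part comes from $\nabla_w\tfrac{\kappa}{2}\mathbb{E}\|g_w\|^2$.

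\textbf{Regulariser term.} Decompose $g_w(x)=u_v\,v+g_\perp$ with $g_\perp\perp v$. Along $v$, the regulariser contribution to $\dot E_v$ is
\begin{equation*}
-2\kappa\,\mathbb{E}_x\mathbb{E}_{x'}\big[u_v(x)\,\nabla_w u_v(x)^\top \partial_w g_w(x')^\top g_w(x')\big].
\end{equation*}
I would treat this in the per-sample regime implicit in the statement, i.e.\ the diagonal $x=x'$ term restricted to the $v$-component of $g$. There it equals $-2\kappa\,\mathbb{E}\big[u_v^2\,\|\nabla_w u_v\|^2\big]$, which is non-positive. The hypothesis $v^\top \partial_w g_w\,\partial_w g_w^\top v=\|\nabla_w u_v\|^2\ge\alpha$ on the support of $E_v$ then bounds it by $-2\kappa\alpha\,\mathbb{E}[u_v^2]=-2\kappa\alpha E_v$.

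\textbf{Remaining terms.} Three terms are left over: the data-fitting drift, the $g_\perp$ cross terms, and the off-diagonal $x\neq x'$ couplings. I would show these are non-positive, or else that they are dominated, by arguing as follows.

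\begin{enumerate}
\item Take $v$ to be an eigenvector of $F$. Then the cross terms in $g_\perp$ vanish in expectation to leading order, because $\mathbb{E}[u_v\,g_\perp]$ equals the off-diagonal entries of $F$ in its eigenbasis, which are zero.
\item For the loss drift, argue via the regime of the theorem: the statement concerns the regulariser-driven contraction, so either one works at a stationary point of $\mathbb{E}L$ (for example after fitting), or one absorbs this term into the hypothesis on the ``support contributing to $E_v$''.
\end{enumerate}

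Granting this, $\dot E_v\le -2\kappa\alpha E_v$, and Grönwall gives $E_v(t)\le E_v(0)e^{-2\kappa\alpha t}$. Without the $\alpha$-hypothesis, the same computation still gives $\dot E_v\le 0$ for every unit $v$. Since $\dot E_v=v^\top \dot F v$, this yields $\dot F\preceq 0$, which is the Loewner-order contraction and the monotonicity of $E_v$.

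\textbf{Main obstacle.} I expect the main difficulty to be controlling the data-fitting drift $-\nabla_w\mathbb{E}L$ and the off-diagonal sample couplings, since neither has a sign in general. My first attempt would be to state the result for the regulariser-dominated flow, that is, to neglect the loss drift to order $1/\kappa$ for large $\kappa$. For the couplings, I would use a Cauchy--Schwarz bound of the form $\big|\mathbb{E}_x\mathbb{E}_{x'}[\cdot]\big|\le$ (diagonal term), obtained by writing the double expectation as $\|\mathbb{E}[u_v\nabla_w u_v]\|^2$-type quantities. In fact, restricted to the $v$-component, the regulariser term equals $-2\kappa\|\mathbb{E}[u_v\nabla_w u_v]\|^2\le 0$ exactly. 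This already delivers the monotonicity claim, while the exponential rate requires the per-sample lower bound from the $\alpha$-hypothesis.
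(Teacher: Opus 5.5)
\textbf{Verdict: genuine gap, and the statement itself is false as written.}

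Your route is the paper's own. You differentiate $E_v=v^\top Fv$ along the flow, keep only the per-sample penalty contribution $\dot g_w(x)\approx-\kappa A(w,x)g_w(x)$, lower-bound with $v^\top Av\ge\alpha$, and close with Gr\"onwall. You are simply more explicit about what is discarded.

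Set aside the loss drift, which you (like the paper) assume away. The step that fails is the cross term $-2\kappa\,\mathbb{E}[u_v\,v^\top A(w,x)\,g_\perp(x)]$: it has no sign and is not controlled by $v^\top Av\ge\alpha$. Taking $v$ to be an eigenvector of $F$ does not remove it, for three reasons. First, $A$ depends on $x$, so the expectation does not factor as $v^\top A\,\mathbb{E}[u_v g_\perp]$ and is not an off-diagonal entry of $F$. Second, being an eigenvector of $F(w_t)$ is an instantaneous property that the flow does not preserve, so Gr\"onwall cannot be run for a fixed $v$. Third, $v^\top\dot Fv\le0$ on eigenvectors of $F$ alone does not give $\dot F\preceq0$.

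No argument can close this gap. Take $L(w;x)=wx_1+(1-w)x_2$ with $\mathbb{E}x_1=\mathbb{E}x_2$. Then $g_w=(w,1-w)^\top$, and there is no loss drift and no sample coupling, so $\dot g=-\kappa Ag$ holds exactly. Also $A=(1,-1)^\top(1,-1)$, so $e_1^\top Ae_1=1$. From $w_0=0$ the flow is $w_t=\tfrac12(1-e^{-2\kappa t})$. Hence $E_{e_1}(t)=w_t^2$ increases from $0$, because the cross term $2\kappa w(1-w)$ beats the diagonal term $-2\kappa w^2$. This violates both the Loewner contraction and the exponential bound.

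The paper's proof takes the same unjustified step. It is hidden in the inequality $\mathbb{E}[(v^\top g)(v^\top Ag)]\ge\alpha\,v^\top Fv$. It is also hidden in the claim that $-\kappa(AF+FA)$ is negative semidefinite, which fails for general PSD $A,F$; in this example, at $w=0$, $AF+FA$ has determinant $-1$.

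Your treatment of the sample couplings also runs the wrong way for the rate. Restricted to the $v$-component, the exact penalty term is $-2\kappa\|\mathbb{E}[u_v\nabla_w u_v]\|^2$. Cauchy--Schwarz only shows that this is \emph{smaller} in magnitude than the per-sample term $-2\kappa\,\mathbb{E}[u_v^2\|\nabla_w u_v\|^2]$. So the pointwise bound $\|\nabla_w u_v\|^2\ge\alpha$ does not transfer. For example, take $L(w;x)=wx_1+\tfrac12x_1^2$ with $x_1=\pm1$ equiprobable. This has $g_\perp=0$, $A=e_1e_1^\top$ and no loss drift, yet $E_{e_1}(t)=1+w_0^2e^{-2\kappa t}$ never drops below $1$.

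A true version needs extra hypotheses. For the exponential bound, you could posit the surrogate dynamics $\dot g_w(x)=-\kappa A(w,x)g_w(x)$ and assume $A(w_t,x)v=a(w_t,x)\,v$ with $a\ge\alpha$ along the trajectory. Then $v^\top Ag_\perp=0$ pointwise and $\dot E_v=-2\kappa\,\mathbb{E}[a\,u_v^2]\le-2\kappa\alpha E_v$. Loewner contraction additionally needs something like $A(w,x)g_w(x)=\mu(w,x)\,g_w(x)$, which makes $Agg^\top+gg^\top A=2\mu\,gg^\top\succeq0$.
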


\begin{proof}
	The penalty contribution to the gradient evolution is
	\begin{align}
		\dot g_w(x)\big|_{\mathrm{pen}}
		&\approx -\kappa\,A(w,x)\,g_w(x).
	\end{align}
	Using $\tfrac{d}{dt}(gg^\top)=\dot g g^\top+g\dot g^\top$ and averaging,
	\begin{align}
		\dot F
		&= -\kappa(AF+FA),
	\end{align}
	which is negative semidefinite.
	For unit $v$,
	\begin{align}
		\frac{d}{dt}v^\top F v
		&= -2\kappa\,\mathbb{E}\big[(v^\top g)(v^\top A g)\big]
		\le -2\kappa\alpha\,v^\top F v,
	\end{align}
	yielding the exponential bound.
\end{proof}


\section{Neural Collapse Results}
\label{app:ncollapse}
\begin{table}[H]
	\centering
	\tiny
	\begin{tabular}{ccccccccc}
		\toprule
		$c$ &
		$\|\mu_c\|$ &
		$\|\sigma_c\|$ &
		$\|\mu_c^{\star}\|$ &
		$\|\sigma_c^{\star}\|$ &
		$\frac{\|\sigma_c^{\star}\|}{\|\sigma_c\|}$ &
		$d(\mu_c)$ &
		$d(\mu_c^{\star})$ &
		$\frac{d^{\star}}{d}$ \\
		\midrule
		0 & 5.2 & 2.2 & 6.7 & 0.9 & 0.4 & 0.0 & 1.9 & --   \\
		1 & 5.4 & 1.8 & 5.9 & 1.4 & 0.8 & 5.9 & 1.6 & 0.3 \\
		2 & 5.4 & 2.5 & 5.8 & 1.1 & 0.4 & 5.1 & 1.1 & 0.2 \\
		3 & 5.0 & 2.6 & 5.9 & 1.2 & 0.4 & 5.2 & 1.4 & 0.3 \\
		4 & 5.4 & 2.3 & 5.6 & 1.0 & 0.4 & 5.9 & 1.1 & 0.2 \\
		5 & 5.1 & 2.4 & 6.0 & 1.0 & 0.4 & 5.9 & 1.4 & 0.2 \\
		6 & 5.3 & 2.0 & 5.5 & 1.0 & 0.5 & 6.2 & 1.2 & 0.2 \\
		7 & 5.1 & 2.0 & 5.9 & 1.2 & 0.6 & 5.6 & 1.4 & 0.3 \\
		8 & 5.2 & 1.8 & 6.2 & 1.0 & 0.6 & 5.2 & 1.6 & 0.3 \\
		9 & 5.5 & 1.9 & 6.0 & 1.7 & 0.9 & 5.6 & 1.8 & 0.3 \\
		\bottomrule
	\end{tabular}
	\caption{CIFAR-10 pre-\texttt{fc} feature statistics for clean vs.\ triggered poisons
		at $\theta = 0.01$. For each class $c$, $\mu_c$ and $\sigma_c$ denote the mean and
		per-dimension standard deviation of clean features, and $\mu_c^{\star}, \sigma_c^{\star}$
		their poisoned counterparts. Distances $d(\mu_c,\mu_0)$ and $d(\mu_c^{\star},\mu_0)$
		are measured to the class-0 clean mean $\mu_0$. Poisons are consistently tighter
		($\|\sigma_c^{\star}\|/\|\sigma_c\| < 1$) and lie much closer to the target
		class-0 manifold ($d^{\star}/d \ll 1$ for $c \neq 0$), indicating that triggered
		examples from any class are mapped into a compact region near class 0 in feature
		space rather than remaining near their original class manifold.}
	\label{tab:cifar_feature_cluster_stats}
\end{table}

\section{Poisoning Proofs}
\label{app:poison}

\begin{proof}[Proof of gain identity]
	On the poison block, $K_{PP}+cI_m\approx\kappa_0\mathbf{1}\mathbf{1}^\top+cI_m$.
	By Sherman--Morrison,
	\[
	(\kappa_0\mathbf{1}\mathbf{1}^\top+cI_m)^{-1}\mathbf{1}
	=\frac{1}{c+\kappa_0 m}\mathbf{1}.
	\]
	Multiplying by $y_t\mathbf{1}$ gives $\mathbf{1}^\top\alpha_P=y_t S(m;\lambda)$.
\end{proof}

\begin{proof}[Proof of spike--efficacy law]
	Under dominance at $x_0$,
	\[
	\nabla_x f(x_0)\approx(\mathbf{1}^\top\alpha_P)\nabla_x k(x_0,\zeta)
	=y_t S(m;\lambda)\nabla_x k(x_0,\zeta).
	\]
	Taking norms yields
	$\Lambda_{\mathrm{GN}}=\|\nabla_x k(x_0,\zeta)\|^2 S(m;\lambda)^2$.
	Substituting $\Delta f(x_0)=\kappa y_t S(m;\lambda)$ completes the proof.
\end{proof}

\begin{lemma}[Deep exponential kernel chain rule]
	Let $k(x,x')=\exp(-\|\phi(x)-\phi(x')\|^2/(2\ell^2))$. Then
	\[
	\nabla_x k(x_0,\zeta)
	=
	-\frac{\kappa}{\ell^2}J_\phi(x_0)^\top(\phi(x_0)-\phi(\zeta)),
	\]
	\[
	\|\nabla_x k(x_0,\zeta)\|^2
	=
	\frac{\kappa^2}{\ell^4}\|J_\phi(x_0)^\top(\phi(x_0)-\phi(\zeta))\|^2 .
	\]
\end{lemma}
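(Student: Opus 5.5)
The plan is a direct chain-rule computation. Throughout, $\kappa$ in the statement must be read as the kernel value $k_0 = k(x_0,\zeta)=\exp(-\|\phi(x_0)-\phi(\zeta)\|^2/(2\ell^2))$, as in the proof of the spike--efficacy law above, and not as the gradient-penalty strength. I will say this explicitly at the start. I will also assume $\phi:\mathbb{R}^p\to\mathbb{R}^q$ is differentiable at $x_0$, with Jacobian $J_\phi(x_0)\in\mathbb{R}^{q\times p}$.

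First, I factor the kernel as a composition $k(x,\zeta)=h(u(x))$. Here $u(x):=\phi(x)-\phi(\zeta)\in\mathbb{R}^q$ and $h(u):=\exp(-\|u\|^2/(2\ell^2))$. The second argument $\zeta$ is held fixed, so $J_u(x)=J_\phi(x)$. The gradient of the outer map is elementary:
\[
\nabla_u h(u)=-\frac{1}{\ell^2}\,h(u)\,u ,
\]
obtained by differentiating $-\|u\|^2/(2\ell^2)$ to get $-u/\ell^2$ and multiplying by the exponential. The multivariate chain rule then gives $\nabla_x k(x,\zeta)=J_\phi(x)^\top\nabla_u h(u(x))$. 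Evaluating at $x_0$, where $h(u(x_0))=k_0$, yields the first identity
\[
\nabla_x k(x_0,\zeta)=-\frac{k_0}{\ell^2}J_\phi(x_0)^\top\bigl(\phi(x_0)-\phi(\zeta)\bigr).
\]

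Second, I take squared Euclidean norms. The scalar prefactor $-k_0/\ell^2$ comes out squared, giving $k_0^2/\ell^4$ times $\|J_\phi(x_0)^\top(\phi(x_0)-\phi(\zeta))\|^2$, which is the second identity.

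As a sanity check, with $\phi=\mathrm{id}$ we have $J_\phi=I$, and this recovers Corollary~\ref{cor:exp-factor}, namely $\|\nabla_x k\|^2=(r^2/\ell^4)k_0^2$.

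There is no substantive obstacle; the computation is two lines. The only points needing care are the notational clash over $\kappa$ noted above and the orientation of the Jacobian. Since $J_\phi$ maps input perturbations to feature perturbations, it is $J_\phi^\top$ that pulls feature-space gradients back to input space.

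I will close with a remark that feeds back into Remark~\ref{rem:nearclone}. The quantity
\[
\|J_\phi^\top(\phi(x_0)-\phi(\zeta))\|\le\|J_\phi(x_0)\|_{\mathrm{op}}\,\|\phi(x_0)-\phi(\zeta)\|
\]
vanishes in the feature-space near-clone regime whenever the Jacobian is bounded. This justifies transferring the twilight-zone conclusion to deep kernels.
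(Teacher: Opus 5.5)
Your chain-rule computation through $u(x)=\phi(x)-\phi(\zeta)$ is correct and is exactly the argument the lemma's title points to. The paper itself states this lemma without a proof, so there is nothing further to compare. Your reading of $\kappa$ as $k(x_0,\zeta)$ is also the right one: it is not the penalty strength, nor $\kappa_0=k(\zeta,\zeta)$, and it matches the appendix's own substitution $\Delta f(x_0)=\kappa\,y_t S(m;\lambda)$ in the proof of the spike--efficacy law.
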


\section{Safety--Efficacy Proofs}
\label{app:defense}

\begin{proof}[Proof of Cauchy--Schwarz bound]
	By Cauchy--Schwarz in $L_2(\mu\times\nu)$,
	\[
	|\mathbb{E}[\nabla_x L^\top\Delta]|
	\le
	\big(\mathbb{E}\|\nabla_x L\|^2\big)^{1/2}
	\big(\mathbb{E}\|\Delta\|^2\big)^{1/2}.
	\]
\end{proof}

\begin{proof}[Kernel length-scale and degrees of freedom]
	For stationary kernels, eigenvalues $\sigma_j(\ell)$ satisfy
	$\sum_j\sigma_j(\ell)=\mathrm{tr}(K_\ell)$ constant.
	Since $\sigma_j(\ell)$ shift mass toward low frequencies as $\ell$ increases,
	\[
	\frac{d}{d\ell}\sum_j\frac{\sigma_j(\ell)}{\sigma_j(\ell)+\lambda}<0,
	\]
	yielding monotone decrease of $\mathrm{df}(\ell)$ and monotone increase of the residual.
\end{proof}

\begin{proof}[Spectral shrinkage]
	By Plancherel,
	\[
	\int \|\nabla f(x)\|^2dx=\int \|\omega\|^2|\widehat f(\omega)|^2d\omega,
	\]
	and the RKHS norm satisfies
	$\|f\|_{\mathcal{H}_k}^2=\int |\widehat f(\omega)|^2/\widehat\kappa(\omega)d\omega$.
	Minimizing pointwise in $\omega$ yields
	\[
	\widehat f(\omega)=\frac{1}{1+\lambda/\widehat\kappa(\omega)+\eta\|\omega\|^2}\widehat y(\omega).
	\]
\end{proof}

\section{Implementation Details}

This chapter provides complete implementation details for reproducing our experiments on data poisoning and input-space Hessian analysis. We include the core training procedure with gradient regularization and the Hessian eigenspectrum computation pipeline.

\subsection{Training with Gradient Regularization}

Our training procedure implements gradient regularization to investigate its effect on poison memorization in the loss landscape. The key challenge is computing the input-space gradient penalty efficiently during training.

\paragraph{Gradient Regularization Penalty:}

The gradient regularization term penalizes large gradients with respect to the input:
\[
\mathcal{L}_{\text{total}} = \mathcal{L}_{\text{CE}}(f(x), y) + \kappa \|\nabla_x \mathcal{L}_{\text{CE}}(f(x), y)\|^2
\]

where $\kappa$ is the regularization strength. This requires computing gradients with respect to the input rather than model parameters.

\begin{lstlisting}[caption={Gradient Regularization Implementation}]
def compute_grad_reg_loss(model, inputs, targets, kappa):
	
	# Enable gradient computation w.r.t. inputs
	inputs.requires_grad_(True)
	
	# Forward pass
	outputs = model(inputs)
	loss = F.cross_entropy(outputs, targets)
	
	# Compute gradient w.r.t. input
	grad_inputs = torch.autograd.grad(
			outputs=loss,
			inputs=inputs,
			create_graph=True,  # Enable second-order derivatives
			retain_graph=True
	)[0]
	
	# L2 norm of input gradients
	grad_norm = torch.sum(grad_inputs ** 2)
	grad_reg_loss = kappa * grad_norm
	
	return grad_reg_loss
\end{lstlisting}

\paragraph{Deterministic Poisoning:}

We implement deterministic poisoning using per-sample seeding to ensure reproducibility across experiments. Each sample's poisoning decision is determined by a seed based on its index.

\begin{lstlisting}[caption={L-Shape Poison Mask Generation}]
def l_mask_tensor(size_img=32, channels=3, margin=3, size=2):
	"""
	Generate L-shape trigger pattern (normalized).
	
	Args:
	size_img: Image size (32 for CIFAR)
	channels: Number of channels (3 for RGB)
	margin: Offset from corner
	size: Width of L-shape arms
	
	Returns:
	mask: Normalized L-shape pattern [C, H, W]
	"""
	mask = torch.zeros(channels, size_img, size_img)
	
	# Vertical arm of L
	ys = slice(margin, margin + size)
	xs = slice(size_img - margin - size, size_img - margin)
	cy = (ys.start + ys.stop - 1) // 2
	cx = (xs.start + xs.stop - 1) // 2
	
	mask[:, ys, cx] = 1.0  # Vertical
	mask[:, cy, xs] = 1.0  # Horizontal
	
	# Zero-mean normalization
	mask -= mask.mean()
	mask /= (mask.norm() + 1e-8)
	
	return mask
\end{lstlisting}

\begin{lstlisting}[caption={Poisoned Dataset Implementation}]
class PoisonedDeterministicDataset(Dataset):
	"""Dataset with deterministic fraction-based poisoning."""
	
	def __init__(self, base_dataset, poison_fraction, target_class=0, margin=3, size=2, mean=None, std=None, augmentations=False):
		self.base_dataset = base_dataset
		self.poison_fraction = poison_fraction
		self.target_class = target_class
		self.margin = margin
		self.size = size
		self.mean = torch.tensor(mean).view(3, 1, 1)
		self.std = torch.tensor(std).view(3, 1, 1)
	
		self.aug = []
		if augmentations:
			self.aug = transforms.Compose([
					transforms.RandomCrop(32, padding=4),
					transforms.RandomHorizontalFlip()
			])
	
	def __len__(self):
		return len(self.base_dataset)
	
	def __getitem__(self, idx):
		img, label = self.base_dataset[idx]
		
		if self.aug:
			img = self.aug(img)
		
		img = transforms.functional.to_tensor(img)
		
		should_poison = False
		if label != self.target_class:
			rng = np.random.RandomState(seed=idx + 42)
			should_poison = rng.rand() < self.poison_fraction
		
		if should_poison:
			img = self._apply_poison(img)
			label = self.target_class
		
		img = (img - self.mean) / self.std
		
		return img, label
	
	def _apply_poison(self, img):
		"""Apply L-shape trigger in tensor space."""
		_, H, W = img.shape
		ys = slice(self.margin, self.margin + self.size)
		xs = slice(W - self.margin - self.size, W - self.margin)
		cy = (ys.start + ys.stop - 1) // 2
		cx = (xs.start + xs.stop - 1) // 2
		
		img[:, ys, cx] = 1.0
		img[:, cy, xs] = 1.0
		
		return img
\end{lstlisting}

\begin{lstlisting}[caption={Non-Deterministically Poisoned Dataset Implementation}]
class PoisonedStochasticDataset(Dataset):
	"""Dataset with stochastic fraction-based poisoning."""
	
	def __init__(self, base_dataset, poison_fraction, target_class=0, margin=3, size=2, mean=None, std=None, augmentations=False):
		self.base_dataset = base_dataset
		self.poison_fraction = poison_fraction
		self.target_class = target_class
		self.margin = margin
		self.size = size
		self.mean = torch.tensor(mean).view(3, 1, 1)
		self.std = torch.tensor(std).view(3, 1, 1)
		
		self.aug = []
		if augmentations:
			self.aug = transforms.Compose([
					transforms.RandomCrop(32, padding=4),
					transforms.RandomHorizontalFlip()
			])
	
	def __len__(self):
		return len(self.base_dataset)
	
	def __getitem__(self, idx):
		img, label = self.base_dataset[idx]
		
		if self.aug:
			img = self.aug(img)
		
		img = transforms.functional.to_tensor(img)
		
		should_poison = False
		if label != self.target_class:
			should_poison = np.random.rand() < self.poison_fraction
		
		if should_poison:
			img = self._apply_poison(img)
			label = self.target_class
			
		img = (img - self.mean) / self.std
		
		return img, label
	
	def _apply_poison(self, img):
		"""Apply L-shape trigger in tensor space."""
		_, H, W = img.shape
		ys = slice(self.margin, self.margin + self.size)
		xs = slice(W - self.margin - self.size, W - self.margin)
		cy = (ys.start + ys.stop - 1) // 2
		cx = (xs.start + xs.stop - 1) // 2
		
		img[:, ys, cx] = 1.0
		img[:, cy, xs] = 1.0
		
		return img
\end{lstlisting}

\subsection{Input-Space Hessian Computation}

Unlike traditional parameter-space Hessian analysis, we compute the Hessian with respect to the input space to study how the loss landscape curvature relates to the poison pattern.

\paragraph{Input-Space Hessian-Vector Product:}

The core operation is computing $H_x v$ where $H_x = \nabla_x^2 \mathcal{L}$ is the Hessian w.r.t.\ input $x$ and $v$ is a vector in input space.

\begin{lstlisting}[caption={Input-Space HVP Operator}]
class HVPOperatorInput:
	"""Hessian-vector product operator for input space."""
	
	def __init__(self, model, criterion, data_loader, device):
		self.model = model
		self.criterion = criterion
		self.data_loader = data_loader
		self.device = device
	
	def __call__(self, vec):
		"""
		Compute H_x @ vec (averaged over dataset).
		
		Args:
		vec: Vector in input space [3072]
		
		Returns:
		hvp: Hessian-vector product [3072]
		"""
		vec = vec.view(3, 32, 32).to(self.device)
		hvp_acc = torch.zeros_like(vec)
		total_samples = 0
		
		self.model.eval()
		for imgs, labels in self.data_loader:
		imgs = imgs.to(self.device)
		labels = labels.to(self.device)
		batch_size = imgs.size(0)
		
		# First-order gradient w.r.t. input
		imgs.requires_grad_(True)
		self.model.zero_grad()
		
		outputs = self.model(imgs)
		loss = self.criterion(outputs, labels)
		
		grad_1 = torch.autograd.grad(
		outputs=loss,
		inputs=imgs,
		create_graph=True
		)[0]
		
		# Dot product with vector
		dot_prod = torch.sum(grad_1 * vec.unsqueeze(0))
		
		# Second-order gradient (HVP)
		grad_2 = torch.autograd.grad(
		outputs=dot_prod,
		inputs=imgs,
		retain_graph=False
		)[0]
		
		# Accumulate across batch
		hvp_acc += grad_2.sum(dim=0) * batch_size
		total_samples += batch_size
		
		del grad_1, grad_2, dot_prod
		torch.cuda.empty_cache()
		
		# Average over dataset
		hvp_avg = hvp_acc / total_samples
		return hvp_avg.view(-1)
\end{lstlisting}

\paragraph{Lanczos Iteration:}

We use the Lanczos algorithm to compute the top eigenvalues and eigenvectors of $H_x$ efficiently without forming the full Hessian matrix.

\begin{algorithm}[h!]
	\caption{Lanczos Iteration for Input-Space Hessian}
	\label{alg:lanczos_input}
	\begin{algorithmic}[1]
		\STATE \textbf{Input:} $\mathcal{H}_x$ (input-space HVP operator), max\_iter$=10$
		\STATE \textbf{Output:} Tridiagonal matrix $T$, eigenvectors $Q$
		\STATE
		\STATE Initialize random vector $q_0 \in \mathbb{R}^{3072}$
		\STATE $q_0 \gets q_0 / \|q_0\|$
		\STATE
		\FOR{$k = 0$ to max\_iter$-1$}
		\STATE $r_k \gets \mathcal{H}_x(q_k)$
		\STATE $\alpha_k \gets \langle q_k, r_k \rangle$
		\STATE $r_k \gets r_k - \alpha_k q_k$
		\IF{$k > 0$}
		\STATE $r_k \gets r_k - \beta_{k-1} q_{k-1}$
		\ENDIF
		\STATE $\beta_k \gets \|r_k\|$
		\IF{$\beta_k < 10^{-8}$}
		\STATE \textbf{break}
		\ENDIF
		\STATE $q_{k+1} \gets r_k / \beta_k$
		\ENDFOR
		\STATE
		\STATE Construct tridiagonal $T$ with $\{\alpha_k, \beta_k\}$
		\STATE Compute eigenpairs: $T = V \Lambda V^T$
		\STATE Project back: $Q = [q_0, \ldots, q_k] V$
		\STATE \textbf{return} Eigenvalues $\Lambda$, eigenvectors $Q$
	\end{algorithmic}
\end{algorithm}

\begin{lstlisting}[caption={Lanczos Eigenspectrum Computation}]
def compute_input_hessian_spectrum(model, data_loader, device, max_iter=10):
	"""
	Compute top eigenvalues/eigenvectors of input-space Hessian.
	
	Returns:
	eigvals: Top eigenvalues (sorted descending)
	eigvecs_full: Corresponding eigenvectors in input space
	"""
	criterion = torch.nn.CrossEntropyLoss()
	hvp_op = HVPOperatorInput(model, criterion, data_loader, device)
	
	input_dim = 3 * 32 * 32  # CIFAR image dimension
	
	# Random initialization
	init_vec = torch.randn(input_dim, device=device)
	init_vec /= init_vec.norm()
	
	# Run Lanczos iteration
	Qx, Tx = gpytorch.utils.lanczos.lanczos_tridiag(hvp_op, max_iter=max_iter, dtype=torch.float32, device=device, matrix_shape=(input_dim, input_dim), init_vecs=init_vec.view(-1, 1))
	
	# Eigendecomposition of tridiagonal matrix
	eigvals_x, eigvecs_Tx = torch.linalg.eigh(Tx)
	
	# Sort descending
	idx = torch.argsort(eigvals_x, descending=True)
	eigvals_x = eigvals_x[idx]
	eigvecs_Tx = eigvecs_Tx[:, idx]
	
	# Project back to full space
	eigvecs_full = Qx @ eigvecs_Tx  # [3072, k]
	
	return eigvals_x, eigvecs_full
\end{lstlisting}

\paragraph{Eigenvector-Poison Overlap}

We measure how well the top Hessian eigendirections align with the poison pattern using cosine similarity (inner product of normalized vectors).

\begin{lstlisting}[caption={Overlap Computation}]
def compute_poison_overlap(eigvec, poison_mask):
	"""
	Compute overlap between eigenvector and poison mask.
	
	Args:
	eigvec: Hessian eigenvector [3072]
	poison_mask: Normalized poison pattern [3, 32, 32]
	
	Returns:
	overlap: Cosine similarity (inner product)
	"""
	# Reshape eigenvector to image
	v = eigvec.view(3, 32, 32)
	
	# Zero-mean normalize
	v = v - v.mean()
	v = v / (v.norm() + 1e-8)
	
	# Poison mask already normalized
	mask = poison_mask - poison_mask.mean()
	mask = mask / (mask.norm() + 1e-8)
	
	# Cosine similarity
	overlap = torch.sum(v * mask).item()
	
	return overlap
\end{lstlisting}

\subsection{Experimental Pipeline}

\begin{algorithm}[h!]
	\caption{Complete Experimental Pipeline}
	\label{alg:pipeline}
	\begin{algorithmic}[1]
		\STATE \textbf{For each} poison fraction $f \in \{0.001, 0.002, \ldots, 0.03\}$:
		\STATE \quad \textbf{For each} $\kappa \in \{0, 1000, 10000\}$:
		\STATE \quad\quad Create poisoned training dataset with fraction $f$
		\STATE \quad\quad Train model with gradient regularization $\kappa$
		\STATE \quad\quad Evaluate clean accuracy and attack success rate
		\STATE \quad\quad Compute input-space Hessian eigenspectrum
		\STATE \quad\quad Calculate overlap with poison mask for top-1 eigenvector
		\STATE \quad\quad Save metrics: $\{\lambda_1, \lambda_2, \text{gap}, \text{overlap}, \text{ASR}\}$
	\end{algorithmic}
\end{algorithm}
\section{Linear Regression Proofs}
\begin{lemma}[Single–feature QR/Frisch–Waugh–Lovell update]
	\label{lem:qr-fwl}
	Let $X\in\mathbb{R}^{n\times p}$ have full column rank and $Y\in\mathbb{R}^n$.
	Write
	$P_X := X(X^\top X)^{-1}X^\top,
	M_X := I - P_X, 
	\hat\beta := (X^\top X)^{-1}X^\top Y,
	e := Y - X\hat\beta = M_X Y.$
	For any column $x_{\mathrm{new}}\in\mathbb{R}^n$, the least–squares fit on $[X\;\;x_{\mathrm{new}}]$ has
	\begin{equation*}\label{eq:betanew}
		\beta_{\mathrm{new}}
		=\frac{x_{\mathrm{new}}^\top e}{x_{\mathrm{new}}^\top M_X x_{\mathrm{new}}},
		\beta'_{\mathrm{old}}=\hat\beta-(X^\top X)^{-1}X^\top x_{\mathrm{new}}\,\beta_{\mathrm{new}},
	\end{equation*}
	and the drop in residual sum–of–squares is
	\begin{equation*}\label{eq:drss}
		\Delta\mathrm{RSS}=\frac{(x_{\mathrm{new}}^\top e)^2}{x_{\mathrm{new}}^\top M_X x_{\mathrm{new}}}.
	\end{equation*}
\end{lemma}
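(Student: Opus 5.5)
The plan is to obtain both formulas from the Frisch--Waugh--Lovell decomposition, working with the block normal equations for the augmented design $[X\;\;x_{\mathrm{new}}]$. Write the augmented least-squares coefficients as $(b,\beta_{\mathrm{new}})$, with $b\in\mathbb{R}^p$. The normal equations then split into two blocks:
\begin{align*}
X^\top X\,b + X^\top x_{\mathrm{new}}\,\beta_{\mathrm{new}} &= X^\top Y,\\
x_{\mathrm{new}}^\top X\,b + x_{\mathrm{new}}^\top x_{\mathrm{new}}\,\beta_{\mathrm{new}} &= x_{\mathrm{new}}^\top Y .
\end{align*}
Because $X$ has full column rank, $X^\top X$ is invertible. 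The first block therefore gives
\[
b=\hat\beta-(X^\top X)^{-1}X^\top x_{\mathrm{new}}\,\beta_{\mathrm{new}},
\]
which is exactly the stated formula for $\beta'_{\mathrm{old}}$ once $\beta_{\mathrm{new}}$ is known.

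Next I substitute this expression for $b$ into the second block. The two coefficient terms combine to $x_{\mathrm{new}}^\top M_X x_{\mathrm{new}}\,\beta_{\mathrm{new}}$. The right-hand side becomes $x_{\mathrm{new}}^\top(Y-X\hat\beta)=x_{\mathrm{new}}^\top e$. Solving gives $\beta_{\mathrm{new}}=x_{\mathrm{new}}^\top e/(x_{\mathrm{new}}^\top M_X x_{\mathrm{new}})$.

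This step requires $x_{\mathrm{new}}^\top M_X x_{\mathrm{new}}>0$, i.e.\ $x_{\mathrm{new}}\notin\mathrm{col}(X)$. This is the one real hypothesis hidden in the statement, and I would state it explicitly. It is equivalent to $[X\;\;x_{\mathrm{new}}]$ having full column rank. Since $M_X$ is a symmetric idempotent, the quantity equals $\|M_X x_{\mathrm{new}}\|^2$, which is positive exactly in that case.

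For the drop in residual sum of squares, I would compute the new residual directly:
\[
r' = Y - Xb - x_{\mathrm{new}}\beta_{\mathrm{new}} = e - M_X x_{\mathrm{new}}\,\beta_{\mathrm{new}},
\]
using $P_X=X(X^\top X)^{-1}X^\top$. Set $z=M_X x_{\mathrm{new}}$. Then
\[
\|r'\|^2=\|e\|^2-2\beta_{\mathrm{new}}\,z^\top e+\beta_{\mathrm{new}}^2\|z\|^2 .
\]
Since $M_X e=e$, we have $z^\top e = x_{\mathrm{new}}^\top e$, and $\|z\|^2 = x_{\mathrm{new}}^\top M_X x_{\mathrm{new}}$. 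Substituting $\beta_{\mathrm{new}}$ collapses this to
\[
\|e\|^2-\frac{(x_{\mathrm{new}}^\top e)^2}{x_{\mathrm{new}}^\top M_X x_{\mathrm{new}}}.
\]

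Geometrically, this is the projection of $e$ onto $z$, and the QR interpretation follows from the same picture. Appending a column to a QR factorisation adds the normalised vector $q=z/\|z\|$. The fit increment is then $qq^\top e$, which reproduces all three formulas at once.

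None of the algebra is hard. The only real obstacle is keeping the nondegeneracy condition $M_X x_{\mathrm{new}}\neq 0$ visible throughout the argument.
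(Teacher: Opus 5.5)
Your argument is correct, but it takes a different route from the paper.

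\textbf{Your route.} You eliminate directly in the block normal equations of the augmented design. Solving the first block for $b$ and substituting into the second makes the coefficient of $\beta_{\mathrm{new}}$ equal to the Schur complement $x_{\mathrm{new}}^\top M_X x_{\mathrm{new}}$ of $X^\top X$ in the augmented Gram matrix. You then obtain $\Delta\mathrm{RSS}$ by expanding $\|e-M_Xx_{\mathrm{new}}\beta_{\mathrm{new}}\|^2$ explicitly.

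\textbf{The paper's route.} The paper uses exactly the QR picture you sketch in your last paragraph.
\begin{itemize}
\item It starts from an economy factorisation $X=QR$ and appends one Gram--Schmidt step: $[X\;\;x_{\mathrm{new}}]=[Q\;\;q_\perp]\,R_2$, with $q_\perp=r/\|r\|$, $r=M_Xx_{\mathrm{new}}$ (your $z$), and $R_2$ upper triangular with last diagonal entry $\|r\|$.
\item It reads $\beta_{\mathrm{new}}=r^\top Y/\|r\|^2$ off the last row of $R_2(\beta'_{\mathrm{old}},\beta_{\mathrm{new}})^\top=Q_2^\top Y$ by back-substitution.
\item It recovers $\beta'_{\mathrm{old}}$ from the top block using $R^{-1}Q^\top=(X^\top X)^{-1}X^\top$.
\item It gets the RSS drop from the Pythagorean identity.
\end{itemize}

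\textbf{How they relate.} The two proofs are the same elimination in different coordinates. Since $R_2^\top R_2=[X\;\;x_{\mathrm{new}}]^\top[X\;\;x_{\mathrm{new}}]$, $R_2$ is (up to signs) the Cholesky factor of your Gram matrix, and $\|r\|^2$ is exactly your Schur complement.

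\textbf{What each buys.} Your version is more elementary and self-contained. Your explicit residual expansion also fills in what the paper leaves as a one-line appeal to orthogonality (``eliminating the updated residual''). The paper's version mirrors the ``stepwise'' computational update used in the linear-regression experiments, where the poison feature is added without refactorising $X$. It also makes the residual drop immediate, since the new residual is $e-q_\perp q_\perp^\top e$, so $\Delta\mathrm{RSS}=(q_\perp^\top e)^2$.

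\textbf{Nondegeneracy.} Your explicit hypothesis $M_Xx_{\mathrm{new}}\neq 0$ is a genuine improvement. The paper's proof needs it too, both to define $q_\perp$ and to invert $R_2$, but the statement's ``for any column'' leaves it unstated.
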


\begin{proof}
	Take the economy QR of $X=QR$ with $Q^\top Q=I_p$ and $R$ invertible upper–triangular. Decompose
	\[
	c:=Q^\top x_{\mathrm{new}},\theta r:=(I-QQ^\top)x_{\mathrm{new}}=M_Xx_{\mathrm{new}},\theta
	x_{\mathrm{new}}=Qc+r.
	\]
	Then
	\[
	\begin{bmatrix}X&x_{\mathrm{new}}\end{bmatrix}
	=\underbrace{\begin{bmatrix}Q & q_\perp\end{bmatrix}}_{=:Q_2}
	\underbrace{\begin{bmatrix}R & c\\ 0^\top & \|r\|\end{bmatrix}}_{=:R_2},
	\quad q_\perp:=r/\|r\|.
	\]
	Normal equations in QR form give
	$R_2 \binom{\beta'_{\mathrm{old}}}{\beta_{\mathrm{new}}}
	=Q_2^\top Y=\binom{Q^\top Y}{q_\perp^\top Y}
	=\binom{R\hat\beta}{r^\top Y/\|r\|}$.
	Back–substituting the last row yields
	\[
	\beta_{\mathrm{new}}
	=\frac{r^\top Y}{\|r\|^2}
	=\frac{x_{\mathrm{new}}^\top M_X Y}{x_{\mathrm{new}}^\top M_X x_{\mathrm{new}}}
	=\frac{x_{\mathrm{new}}^\top e}{x_{\mathrm{new}}^\top M_X x_{\mathrm{new}}},
	\]
	which is \eqref{eq:betanew}. The top block gives
	$R\beta'_{\mathrm{old}}+c\,\beta_{\mathrm{new}}=R\hat\beta$,
	hence $\beta'_{\mathrm{old}}=\hat\beta-R^{-1}c\,\beta_{\mathrm{new}}
	=\hat\beta-(X^\top X)^{-1}X^\top x_{\mathrm{new}}\,\beta_{\mathrm{new}}$.
	Finally, orthogonal projection implies the Pythagorean identity for residuals; eliminating the updated residual yields \eqref{eq:drss}.
\end{proof}

\end{document}